\documentclass[dvipsnames]{article}
\PassOptionsToPackage{numbers,sort,compress}{natbib}
\usepackage[final]{neurips_2025}
\usepackage[colorlinks=true, linkcolor=BrickRed, urlcolor=Blue, citecolor=Blue, anchorcolor=blue, backref=page]{hyperref}
\renewcommand*{\backref}[1]{}
\renewcommand*{\backrefalt}[4]{
    \ifcase #1
          \or [Cited on p.~#2.]
          \else [Cited on p.~#2.]
    \fi
    }
\usepackage[normalem]{ulem}
\usepackage[utf8]{inputenc} 
\usepackage[T1]{fontenc}    
\usepackage{url}            
\usepackage{booktabs}       
\usepackage{amsfonts}       
\usepackage{nicefrac}       
\usepackage{microtype}      
\usepackage{xcolor}         
\usepackage{pdfpages}
\usepackage{amsbsy}
\usepackage{amssymb}
\usepackage{graphicx}
\usepackage{amsmath}
\usepackage[utf8]{inputenc}
\usepackage{enumerate}
\usepackage{relsize}
\usepackage{xcolor} 
\usepackage{listings}
\definecolor{Mygrey}{gray}{0.75}
\definecolor{Cgrey}{gray}{0.4}
\lstloadlanguages{R}
\usepackage{tikz}
\usetikzlibrary{shapes.geometric, arrows, shadows, bayesnet}
\usepackage{float}
\usepackage{mathtools}
\usepackage{dsfont}
\usepackage{enumitem}
\usepackage{subcaption}
\usepackage{array}
\usepackage{amsthm}
\lstset{
language=R,
basicstyle=\ttfamily\scriptsize,
commentstyle=\itshape\color{Cgrey},
numbers=left,
numberstyle=\ttfamily\color{Mygrey}\tiny,
stepnumber=1,
numbersep=5pt,
backgroundcolor=\color{white},
showspaces=false,
showstringspaces=false,
showtabs=false,
frame=single,
tabsize=2,
captionpos=b,
breaklines=true,
keywordstyle={},
morekeywords={},
xleftmargin=4ex, 
literate={<-}{{$\leftarrow$}}1 {~}{{$\sim$}}1}
\lstset{escapeinside={(*}{*)}} 
\setlength{\topsep}{8.0pt plus2.0pt minus4.0pt}
\setlength{\intextsep}{20.0pt plus2.0pt minus4.0pt}

\newcommand{\supp}{\mathrm{supp}}
\newcommand{\OO}{O}

\graphicspath{{Pictures/}}

\newcommand{\indep}{\mbox{${}\perp\mkern-11mu\perp{}$}}

\usepackage{wrapfig}
\usepackage{amsmath}
\usepackage{nccmath}
\usepackage{amsthm}
\usepackage{amssymb}
\usepackage{amsthm,thmtools,thm-restate}
\usepackage{cleveref}
\crefname{figure}{Fig.}{Figs.}
\crefname{definition}{Defn.}{Defns.}
\crefname{corollary}{Cor.}{Cors.}
\crefname{proposition}{Prop.}{Props.}
\crefname{theorem}{Thm.}{Thms.}
\crefname{remark}{Remark}{Remarks}
\crefname{principle}{Principle}{Principles}
\crefname{lemma}{Lemma}{Lemmata}
\crefname{claim}{Claim}{Claims}
\crefname{table}{Tab.}{Tabs.}
\crefname{section}{\S}{\S\S}
\crefname{subsection}{\S}{\S\S}
\crefname{subsubsection}{\S}{\S\S}
\crefname{assumption}{Asm.}{Asms.}
\crefname{appendix}{Appx.}{Appx.}
\crefname{equation}{Eq.}{Eqs.}
\crefname{example}{Example}{Examples}
\newcommand{\src}{\mathrm{src}}
\newcommand{\tgt}{\mathrm{tgt}}
\newcommand{\edgecolor}{ForestGreen}
\newcommand\independent{\protect\mathpalette{\protect\independenT}{\perp}}
\def\independenT#1#2{\mathrel{\rlap{$#1#2$}\mkern2mu{#1#2}}}

\newtheorem{theorem}{Theorem}

\newtheorem{lemma}[theorem]{Lemma}

\newtheorem{proposition}[theorem]{Proposition}

\newtheorem{assumption}{Assumption}
\newtheorem{remark}[theorem]{Remark}

\title{
Transferring Causal Effects using Proxies
}

\author{%
Manuel Iglesias-Alonso\thanks{Not with ETH anymore, but most of this work was done while MIA was a student at this university. Correspondence to: \href{mailto:igalonsomanuel@gmail.com}{igalonsomanuel@gmail.com}.}\\ ETH Z\"urich
\And 
Felix Schur\thanks{Joint supervision.}\\ Seminar for Statistics\\ ETH Z\"urich\AND
Julius von K\"ugelgen$^\dagger$\\ Seminar for Statistics\\ ETH Z\"urich
\And Jonas Peters$^\dagger$\\ Seminar for Statistics\\ ETH Z\"urich
}

\usepackage[toc,page,header]{appendix}
\usepackage{minitoc}
\newcommand{\changelinkcolor}[1]{\hypersetup{linkcolor=#1}}  

\AtBeginDocument{
  
}

\begin{document}
\doparttoc
\faketableofcontents

\maketitle

\begin{abstract}
We consider the problem of estimating a causal effect in a multi-domain setting. 
The causal effect of interest is confounded by an unobserved confounder and can change between the different domains.  
We assume that we have access to a proxy of the hidden confounder and that all variables are discrete or categorical.
We propose methodology to estimate the causal effect in the target domain, where we assume to observe only the proxy variable.
Under these conditions, we prove identifiability
(even when treatment and response variables are continuous). We 
introduce two estimation techniques, prove consistency, and derive confidence intervals.
The theoretical results are supported by simulation studies and a real-world example studying the causal effect of website rankings on consumer choices.
\end{abstract}

\section{Introduction}
\label{ch:Introduction}
\looseness-1 
Estimating the causal effect of a treatment or exposure $X$ on an outcome or response $Y$ of interest is a core objective across the social and natural sciences~\citep{angrist2009mostly,morgan2014counterfactuals,imbens2015causal,hernan2020causal}. However, a major obstacle to such causal inference from observational (passively collected) data is the existence of unmeasured confounders~$U$, i.e., unobserved variables that influence both the treatment and the outcome. 
For example, suppose that we wish to study the causal effect of a new experimental drug~$X$ on survival~$Y$.
Since such drugs tend to be administered only in severe cases, i.e., to patients in poor health with an already dire prognosis~$U$, untreated patients will generally have better health outcomes.
Na\"ively comparing average survival between treated and untreated patients can thus lead to erroneous conclusions~\citep{simpson1951interpretation}. 
For accurate causal inference, it is therefore important 
that confounders are 
measured and
correctly adjusted for~\citep{robins1986new}. 

\looseness-1 Since it is difficult to rule out that some confounders remain unobserved,
randomised controlled trials (RCTs) 
are considered the gold standard for causal inference~\citep{neyman1923application,fisher1935design}. 
In an RCT, the treatment assignment is randomised, thereby eliminating any influences from potential confounders. 
However, 
for most settings of interest, RCTs are infeasible due to ethical or practical constraints. 
Much research has thus been dedicated to devising specialised methods for causal inference from purely observational data.
An important step in this endeavour 
 is to establish identifiability, i.e., to show that the causal estimand 
can be expressed in terms of distributions that only involve observed quantities~\citep{pearl2009causality}.
To this end, existing methods
leverage additional structure and assumptions, often in the form of
other observed variables, such as instruments~\citep{angrist1996identification}, mediators~\citep{pearl1995causal}, or proxies~\citep{kuroki2014measurement,miao2018identifying} (sometimes called~negative controls~\citep{lipsitch2010negative}), which have certain causal relationships to the treatment, outcome, and confounder. 
In the present work, we propose a novel 
proxy-based approach.

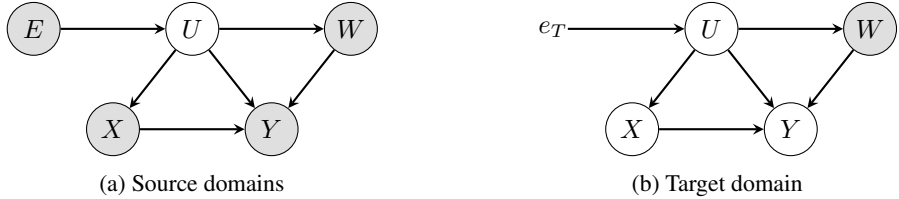
\begin{figure}[t]
    \newcommand{\xshift}{6em}
    \newcommand{\yshift}{-4em}
    \centering
    \begin{subfigure}[b]{0.5\textwidth}
        \centering
        \begin{tikzpicture}
            \centering
            \node(E)[obs]{$E$};
            \node(U)[latent, xshift=\xshift]{$U$};
            \node(W)[obs, xshift=2*\xshift]{$W$};
            \node(X)[obs, xshift=0.5*\xshift, yshift=0.95*\yshift]{$X$};
            \node(Y)[obs, xshift=1.5*\xshift, yshift=0.95*\yshift]{$Y$};
            \edge[thick, -stealth]{E}{U};
            \edge[thick, -stealth]{U}{X,W};
            \edge[thick, -stealth]{U,X}{Y};
            \edge[thick, -stealth,]{W}{Y};
        \end{tikzpicture}
        \caption{Source domains}
        \label{fig:graph_source_domains}
    \end{subfigure}%
    \begin{subfigure}[b]{0.5\textwidth}
        \centering
        \begin{tikzpicture}
            \centering
            \node(E)[const]{$e_T$};
            \node(U)[latent, xshift=\xshift]{$U$};
            \node(W)[obs, xshift=2*\xshift]{$W$};
            \node(X)[latent, xshift=0.5*\xshift, yshift=0.95*\yshift]{$X$};
            \node(Y)[latent, xshift=1.5*\xshift, yshift=0.95*\yshift]{$Y$};
            \edge[thick, -stealth]{E}{U};
            \edge[thick, -stealth]{U}{X,W};
            \edge[thick, -stealth]{U,X,W}{Y};
        \end{tikzpicture}
        \caption{Target domain}
        \label{fig:graph_target_domain}
    \end{subfigure}
    \caption{\textbf{Causal effect estimation in unseen domains using proxies.}
    \label{fig:graphs_all_domains}
    \looseness-1
    We seek to estimate the causal effect of a treatment $X$ on an outcome $Y$ in the presence of an unobserved confounder~$U$. The main learning signal takes the form of proxy measurements $W$ of $U$. Moreover, we observe data from multiple domains or environments $E$, which 
    differ through shifts in the distribution of
    $U$.
    (a)~In the available source domains, we observe $E$, $X$, $Y$, and $W$. (b)~In the target domain ($E=e_T$) for which we aim to estimate the causal effect, only~$W$ is observed. 
    We prove that the available data 
    can suffice to identify the target interventional distribution 
    $\mathbb{Q}_Y^{\mathrm{do}(X:=x)} := 
    \mathbb{P}_{Y}^{\mathrm{do}(X:=x, E:=e_T)}$. 
    }
    \label{fig:CausalGraph1}
\end{figure}

\subsection{Related work}
\looseness-1 
The intuition behind proximal causal inference is that, even if some confounders $U$ are unobserved and thus cannot be directly adjusted for, observed proxy variables~$W$ that are related to $U$ may provide sufficient information 
to correct for the unobserved confounding~\citep{tchetgen2024introduction}.
In contrast to approaches rooted in latent variable modelling~\citep[e.g.,][]{louizos2017causal,wang2019blessings,prashant2025scalable}, proximal causal inference methods typically do not aim to estimate the confounder explicitly but instead seek to devise more direct estimation methods. In contrast to literature on transportability and data fusion~\citep[e.g.][]{bareinboim2013general,pearl2014external,bareinboim2016causal} where identifiability is determined based purely on graphical criteria, proxy-based methods often rely on additional assumptions concerning the informativeness of the available proxies.

\begin{wrapfigure}[]{R}{0.25\textwidth}
\centering
\vspace{-2.1em}
\begin{tikzpicture}
    \centering
    \newcommand{\xshift}{4em}
    \newcommand{\yshift}{-3em}
    \node(U)[latent, xshift=\xshift]{$U$};
    \node(W)[obs, xshift=2*\xshift]{$W$};
    \node(X)[obs, xshift=0.5*\xshift, yshift=\yshift]{$X$};
    \node(Y)[obs, xshift=1.5*\xshift, yshift=\yshift]{$Y$};
    \edge[thick, -stealth]{U}{X,W};
    \edge[thick, -stealth]{U,X}{Y};
\end{tikzpicture}
\vspace{-1.4em}
\end{wrapfigure}
\paragraph{Single noisy measurement of the confounder.}
In the simplest case, 
we observe a single proxy~$W$ of~$U$ 
which is independent of $X$ and $Y$ given~$U$ (e.g., resulting from non-differential measurement error; see inset figure), and both $U$ and $W$ are discrete~\citep{greenland1980effect,carroll2006measurement,pearl2010bias}.
In this case, the confounding bias can be reduced (compared to no adjustment) by adjusting for~$W$ if the confounder acts
monotonically~\citep{ogburn2013bias}. 
Further, if the error mechanism, i.e., the conditional distribution 
$\mathbb{P}(W|U)$ is known, the causal effect is fully identified via the matrix adjustment method of~\citet{GreenlandLash}.
However, in practice, $\mathbb{P}(W|U)$ 
is typically unknown, and proxies may causally influence the treatment or the outcome. 

\begin{wrapfigure}[]{R}{0.25\textwidth}
\centering
\vspace{-2.1em}
\begin{tikzpicture}
    \centering
    \newcommand{\xshift}{4em}
    \newcommand{\yshift}{-3em}
    \node(Z)[obs]{$Z$};
    \node(U)[latent, xshift=\xshift]{$U$};
    \node(W)[obs, xshift=2*\xshift]{$W$};
    \node(X)[obs, xshift=0.5*\xshift, yshift=\yshift]{$X$};
    \node(Y)[obs, xshift=1.5*\xshift, yshift=\yshift]{$Y$};
    \edge[thick, -stealth]{U}{X,W};
    \edge[thick, -stealth]{U,X}{Y};
    \edge[thick, -stealth]{U}{Z};
    \edge[thick, -stealth]{Z}{X};
    \edge[thick, -stealth, dashed]{W}{Y};
\end{tikzpicture}
\vspace{-1.5em}
\end{wrapfigure}
\paragraph{Learning from multiple proxies.}
\looseness-1 
As shown  by~\citet{kuroki2014measurement}, 
$\mathbb{P}(W|U)$ and thus the causal effect of $X$ on $Y$ can sometimes be identified if a second discrete proxy $Z$ of~$U$ is available (see inset figure without the dashed edge),
provided that
the proxies are
sufficiently informative (formalised via invertibility of certain matrices of conditional probabilities).
\citet{MiaoEtAl2018} extend this result to the more general setting in which $W$ can influence $Y$  (where $\mathbb{P}(W|U)$ is no longer identifiable) and establish conditions for identifiability when the confounder and proxies are continuous. For the latter, the matrix adjustment method is replaced by an approach based on bridge functions, obtained as the solutions to certain integral equations~\citep{MiaoEtAl2018}. 
Subsequent work provides extensions to longitudinal settings~\citep{ying2023proximal,imbens2025long} and efficient estimation methods based on semiparametric models~\citep{cui2024semiparametric}, kernel methods~\citep{mastouri2021proximal}, or deep learning~\citep{xu2021deep}.

\paragraph{Proxies for domain adaptation.}
\looseness-1 
\citet{alabdulmohsin2023adapting} use proxy methods for single-source unsupervised domain adaptation, i.e., to predict $Y$ 
in a new target domain where only $X$ is observed. 
To this end, they rely on the latent shift assumption, which posits that the domain shift is entirely due to a shift in the distribution of a discrete latent confounder $U$ of $X$ and $Y$, and leverage source observations of a noisy measurement $W$ of $U$ and a concept bottleneck variable~\citep{koh2020concept} that mediates the effect of $X$ on $Y$ to identify the optimal target predictor.
\citet{tsai2024proxy} generalise these results in several ways, most notably showing that the concept bottleneck is not needed if multiple, sufficiently diverse source domains are available (see~\cref{fig:graph_source_domains}) and both $W$ and $X$ are observed in the target domain.
\citet{prashant2025scalable} consider domain generalisation under the latent shift assumption via approximate identification of the unobserved confounder in the presence of either proxies or multiple domains.

\paragraph{Learning from multiple domains.}
Causal approaches to domain generalization~\citep[e.g.,][]{peters2016causal,arjovsky2019invariant,rojas2018invariant,Christiansen2022,eastwood2023spuriosity} typically seek to discard spurious features and learn one or more domain-invariant predictors 
for~$Y$, rather than the target interventional  distribution $\mathbb{Q}(Y|\mathrm{do}(x))$.
In contrast to work on combining experimental and observational data to improve statistical efficiency~\citep[e.g., ][]{colnet2024causal,kladny2023causal},  we assume no access to observations of the outcome $Y$ in the target domain.

\begin{figure}[t]
    \newcommand{\xshift}{5.25em}
    \newcommand{\yshift}{-2.8em}
    \centering
    \begin{subfigure}{0.33\textwidth}
        \centering
        \begin{tikzpicture}
            \centering
            \node(E)[obs]{$E$};
            \node(U)[latent, xshift=\xshift]{$U$};
            \node(W)[obs, xshift=2*\xshift]{$W$};
            \node(X)[obs, xshift=0.5*\xshift, yshift=2*\yshift]{$X$};
            \node(Y)[obs, xshift=1.5*\xshift, yshift=2*\yshift]{$Y$};
            \node(Z)[obs, xshift=\xshift, yshift=\yshift]{$Z$};
            \edge[thick, -stealth]{E}{U};
            \edge[thick, -stealth]{U}{X,W};
            \edge[thick, -stealth]{U,X}{Y};
            \edge[thick, -stealth]{W}{Y};
            \edge[thick, -stealth]{U}{Z};
            \edge[thick, -stealth]{Z}{X};
            \edge[thick, -stealth]{Z}{Y};
            \edge[thick, -stealth]{Z}{W};
            \edge[thick, -stealth]{E}{Z};
            \edge[thick, -stealth]{E}{X};
        \end{tikzpicture}
    \end{subfigure}%
    \begin{subfigure}{0.33\textwidth}
        \centering
        \begin{tikzpicture}
            \centering
            \node(E)[obs]{$E$};
            \node(U)[latent, xshift=\xshift]{$U$};
            \node(W)[obs, xshift=2*\xshift]{$W$};
            \node(X)[obs, xshift=0.5*\xshift, yshift=2*\yshift]{$X$};
            \node(Y)[obs, xshift=1.5*\xshift, yshift=2*\yshift]{$Y$};
            \node(Z)[obs, xshift=\xshift, yshift=\yshift]{$Z$};
            \edge[thick, -stealth]{E}{U};
            \edge[thick, -stealth]{U}{X,W};
            \edge[thick, -stealth]{U,X}{Y};
            \edge[thick, -stealth,]{W}{Y};
            \edge[thick, -stealth]{U}{Z};
            \edge[thick, -stealth]{Z}{X};
            \edge[thick, -stealth]{Z}{Y};
            \edge[thick, -stealth]{W}{Z};
            \edge[thick, -stealth]{E}{X};
        \end{tikzpicture}
    \end{subfigure}%
    \begin{subfigure}{0.33\textwidth}
        \centering
        \begin{tikzpicture}
            \centering
            \node(E)[obs]{$E$};
            \node(Z)[obs, xshift=\xshift, yshift=\yshift]{$Z$};
            \node(U)[latent, xshift=\xshift]{$U$};
            \node(W)[obs, xshift=2*\xshift]{$W$};
            \node(X)[obs, xshift=0.5*\xshift, yshift=2*\yshift]{$X$};
            \node(Y)[obs, xshift=1.5*\xshift, yshift=2*\yshift]{$Y$};
            \edge[thick, -stealth]{E}{U};
            \edge[thick, -stealth]{U}{X,W};
            \edge[thick, -stealth]{U,X}{Y};
            \edge[thick, -stealth,]{W}{Y};
            \edge[thick, -stealth]{Z}{U};
            \edge[thick, -stealth]{Z}{X};
            \edge[thick, -stealth]{Z}{Y};
            \edge[thick, -stealth]{Z}{W};
            \edge[thick, -stealth]{E}{Z};
            \edge[thick, -stealth]{E}{X};
        \end{tikzpicture}
    \end{subfigure}%
    \caption{\looseness-1\textbf{Examples of causal graphs satisfying our identifiability conditions.} Whereas the paper mostly focuses on the scenario from~\cref{fig:CausalGraph1}, our identifiability results hold for a broader class of causal models including additional covariates $Z$ and unmediated covariate shift via $E\to X$, see~\cref{app:extensions}.} 
    \label{fig:GeneralGraphs}
\end{figure}
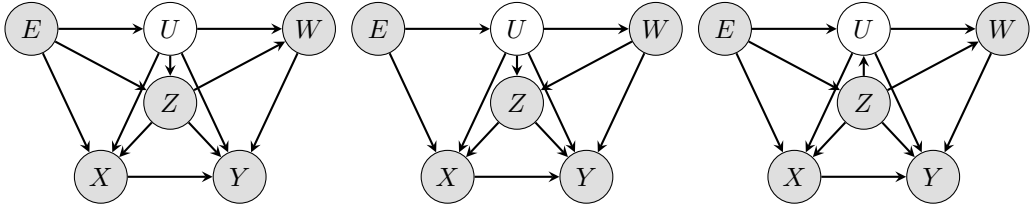

\subsection{Overview and main contributions}
\looseness-1 
In this work, we consider the problem of estimating the causal effect of $X$ on $Y$ in the presence of a discrete unobserved confounder $U$, given a proxy $W$ of $U$ and data from multiple domains. 
Our main focus is on the setting illustrated in~\cref{fig:CausalGraph1}, which also relies on the latent shift assumption~\citep{alabdulmohsin2023adapting} 
(though see~\cref{fig:GeneralGraphs}, \cref{sec:extensions}, and \cref{remark:necessary_conditions} in~\cref{app:extensions} for a discussion of the relaxation of this assumption)
and is similar to that considered by~\citet{tsai2024proxy}. However, a key difference is 
the estimand of interest. Whereas \citeauthor{tsai2024proxy} seek to estimate the conditional mean $\mathbb{E}[Y|X=x]$ under the target distribution $\mathbb{Q}$, our focus is on the interventional distribution $\mathbb{Q}(Y|\mathrm{do}(x))$. 
Moreover, we only need to observe $W$ 
in the target domain, as shown in~\cref{fig:graph_target_domain};
we thus need to transfer information from the source domains to the target domain.
While our goal is more aligned with proximal causal inference than with domain adaptation, a key difference to the setting studied, e.g., by~\citet{miao2018identifying} is that in our case, the causal effect of $X$ on $Y$ may differ across domains due to shifts in the distribution of~$U$ (even though the joint causal effect of $X$ \textit{and} $U$ on $Y$ is invariant \citep[e.g.,][]{peters2016causal}).
For ease of presentation, we focus on a setting in which all variables are discrete. 
We discuss some extensions to continuous variables in \cref{sec:extensions,s:Summary}.

\looseness-1 The remainder of the paper is organised as follows. First, we formally state the considered problem setting~(\cref{ch:ProblemDescription}) and establish sufficient conditions for identifiability~(\cref{ch:Identifiability}).  We then propose two estimators based on alternative parametrisations of the model and study their statistical properties~(\cref{ch:estimation}).
Empirically, we investigate our estimators in simulations~(\cref{sec:simu}) and present an application to hotel ranking data~(\cref{ch:Application}). We conclude with a summary and an outlook for future work~(\cref{s:Summary}). 

We highlight the following main contributions:
\begin{itemize}
    \item We prove that, given only observations of $W$ in the target domain, the interventional distribution $\mathbb{Q}(Y|\mathrm{do}(x))$ is identified~(\cref{prop:identifiability}) if the available source domains are sufficiently diverse~(\cref{as:RankAssumption}). 
    Furthermore, we relax the latent shift assumption by allowing
    for additional covariates and edges, for example (\cref{fig:GeneralGraphs} and \cref{thm:id_obs_confounder}).
    \item We propose two consistent estimators, one of which is asymptotically normal and comes with confidence intervals~(\cref{prop:CausalConsistency,prop:clt}). 
    \item We empirically verify that our estimators compare favourably to naive and proxy-adjustment baselines~(\cref{fig:BaselineComparison}) and that the derived confidence intervals have valid coverage and decreasing length as sample size increases.
\end{itemize}

\section{Intervention distributions under domain shifts using proxies} \label{ch:ProblemDescription}

\paragraph{Data generating process.}
Formally, we assume that the data are
generated by a structural causal model (SCM) \citep{CausalityPearl, Bongers2020}
${\mathcal{C} \coloneqq (S,\mathbb{P}_N)}$ with the following collection $S$ of assignments:
\begin{equation}
\begin{aligned}
\label{eq:SCM}
 E &\coloneqq f_E(N_E), \qquad
 &U&\coloneqq f_U(E,N_U), 
\qquad \qquad  W \coloneqq f_W(U,N_W),\\
 X &\coloneqq f_X(U,N_X), 
 \qquad &Y&\coloneqq f_Y(U,W,X,N_Y).
\end{aligned}
\end{equation}
\looseness-1 The noise variables $N\coloneqq(N_E,N_U,N_W,N_X,N_Y)$ 
follow the joint distribution $\mathbb{P}_N$ and are jointly independent.
The SCM $\mathcal{C}$ induces the causal graph $\mathcal{G}$ shown in~\cref{fig:CausalGraph1} and a distribution over the random variables $(E,U,W,X,Y)$, defined as the push-forward of $\mathbb{P}_N$ via~\eqref{eq:SCM},
which we denote by $\mathbb{P}^{\mathcal{C}}$ or $\mathbb{P}$.
As proved, e.g., by \citet{LDLL}, 
this induced distribution 
is Markov with respect to (w.r.t.)~$\mathcal{G}$.
(Since our work does not consider counterfactual statements, one could also use causal graphical models and assume that the observed 
distribution is Markov w.r.t.\ the graph 
in \cref{fig:CausalGraph1}(a).)

\looseness-1 
\paragraph{Generalization.} 
For sake of accessibility, we focus our presentation on the setting from~\cref{fig:CausalGraph1}. However, our identifiability results directly apply to a broader class of causal models (that may contain additional covariates $Z$), whose induced causal graphs $\mathcal{G}$ satisfy the d-separation~\citep{pearl2009causality} statements 
\begin{align*}
    Y{\indep}_{\mathcal{G}} \,E |(U,X,Z),\quad W\indep_\mathcal{G} \,(E,X)|(U,Z), \quad 
   U\indep_{\mathcal{G}_{\overline{X}}} \, X|Z, \quad  
   Y \indep_{\mathcal{G}_{\underline{X}}} \,X | (U,Z),
 \end{align*}
where $\mathcal{G}_{\overline{X}}$ and $\mathcal{G}_{\underline{X}}$
denote the graphs obtained by respectively removing the edges coming into $X$ and going out of $X$ from $\mathcal{G}$, see~\cref{fig:GeneralGraphs} for examples and \cref{app:ObservedConfounder} for a more detailed discussion. We believe that our estimators and inference results can also be generalised to these settings analogously.

\paragraph{Domains.}
\looseness-1 
We consider $k_E$ source domains. The variable $E$ is a categorical domain indicator taking values in $\{e_1,\dots,e_{k_E}\}$.
The labels $e_i$ are arbitrary and serve only to distinguish domains. We assume no structure or prior knowledge about the domains. The target domain corresponds to 
$E=e_T$ (not necessarily contained in
$\{e_1,\dots,e_{k_E}\}$)
and is described by the distribution
$\mathbb{P}^{\mathcal{C}; \mathrm{do}(E:= e_T)}$, which we denote by $\mathbb{Q}^{\mathcal{C}}$ or $\mathbb{Q}$. 

\paragraph{Support.}
Throughout, we assume that 
$E$, $U$, $W$, $X$, and $Y$ are discrete (though see the discussion after~\cref{prop:identifiability}). We denote the support of a discrete random variable $V$ by 
$\supp(V) = \{v_1,\dots,v_{k_V}\}$, so, e.g., $\supp(X) = \{x_1, \ldots, x_{k_X}\}$.
We assume that $\mathbb{P}^{\mathcal{C}}$ has full support over 
$(E,U,W,X,Y)$, i.e., $\supp((E,U,W,X,Y)) = \supp(E) \times \supp(U) \times \supp(W) \times \supp(X) \times \supp(Y)$.
The probability mass functions (pmfs) in the source and target domains are denoted by lowercase letters $p$ and $q$, respectively, so, e.g., $p(y|e,x) = \mathbb{P}(Y=y|E=e,X=x)$ or $q(w) = \mathbb{Q}(W=w)$. 

\looseness-1
\paragraph{Matrix notation.} \looseness-1 
We 
write marginal pmfs as column vectors, $P(A):=(p(a_1), \dots, p(a_{k_A}))^\top$, and conditional pmfs as matrices, $P(A|B)=(p(a_i|b_j))_{ij}$, with the $i\textsuperscript{th}$ row given by $P(a_i|B)$ and the $j\textsuperscript{th}$ column by $P(A|b_j)$. For additional discrete random variables $C$ and $D$, we write  $P(a|B,c) \coloneqq \bigl( p(a|b_1,c),\dots,p(a|b_{k_B},c) \bigr)$ and $P(a|B,C,d) \coloneqq \bigl( p(a|B,c_1,d)^\top,\dots,p(a|B,c_{k_C},d)^\top \bigr)$. In the target domain, we use $Q$ instead of $P$. 
The distributions induced by the last four structural assignments of the SCM~$\mathcal{C}$ in \cref{eq:SCM} are thus described by the matrices $\{$$P(U|E)$, $Q(U)$$\}$, 
$P(W|U)$, $P(X|U)$, and \{$P(y|U,W,x)\}_{x \in \supp(X) ,y \in \supp(Y)}$, respectively.

\paragraph{Data.}
As illustrated in~\cref{fig:CausalGraph1},
we assume that we obtain 
a sample of $n_\src$ independent and identically distributed (i.i.d.) source realizations
$(E_1,W_1,X_1,Y_1), \ldots, (E_{n_{\src}},W_{n_{\src}},X_{n_{\src}},Y_{n_{\src}})$ 
from $\mathbb{P}_{\OO} {\coloneqq \mathbb{P}^{\mathcal{C}}_{(E,W,X,Y)}}$ 
and
$n_{\tgt} := n - n_{\src}$ i.i.d.\ target realizations $W_{{n_{\src}}+1}, \ldots, W_{n}$ from $\mathbb{Q}$.

\paragraph{Estimand.}
The goal is to estimate 
\begin{equation} \label{eq:intervdistr}
    q(y|\mathrm{do}(x)) \coloneqq \mathbb{Q}(Y=y|\mathrm{do}(X:=x)),
\end{equation}
which, by slight abuse of notation, we call the causal effect from $X$ on $Y$ (in the target domain $e_T$).
Since the domain is allowed to affect the hidden confounder, 
the causal effect in the target domain
is, 
in general,
different from $p(y|e,\mathrm{do}(x))$ 
when $e\in \{1, \ldots, e_K\}$ 
is a source domain. 

\section{Identifiability}
\label{ch:Identifiability}
We now show that \eqref{eq:intervdistr} is identifiable from $\mathbb{P}_O$ and $\mathbb{Q}_W$. 
More formally, under suitable assumptions, we  prove the following statement: if 
$\mathcal{C}_1$
and
$\mathcal{C}_2$
are of the form of~\eqref{eq:SCM} such that 
$\mathbb{P}_O^{\mathcal{C}_1} = \mathbb{P}_O^{\mathcal{C}_2}$
and
$\mathbb{Q}_W^{\mathcal{C}_1} = \mathbb{Q}_W^{\mathcal{C}_2}$, 
then
$
q^{\mathcal{C}_1}(y|\mathrm{do}(x)) =
q^{\mathcal{C}_2}(y|\mathrm{do}(x))$.
The key idea of the proof 
(inspired by ideas presented by \citet[][Section 2]{MiaoEtAl2018})
is to consider the covariate adjustment formula \citep{robins1986new,pearl1993bayesian,spirtes1993causation}
\begin{equation}
    q(y|\mathrm{do}(x)) = Q(y|U,x) \, Q(U) = \sum_{r=1}^{k_U} q(y|u_r,x) \, q(u_r).
    \label{eq:CovAdj}
\end{equation}
Clearly, $U$ is unobserved, so we cannot exploit this equation directly. 
Instead, we will rewrite  
$Q(y|U,x)$,
so that it depends on $U$ only through a factor $P(W|U)$. Since, by invariance or modularity, $P(W|U) = Q(W|U)$, the dependence on $U$ vanishes after marginalising w.r.t.~$U$ as in \cref{eq:CovAdj}. 
To do so, we use invertibility of $P(W|U)$, which is guaranteed by the following assumption: 
\begin{assumption}
For all $x \in \supp(X)$, we have $\mathrm{rank}(P(W|E,x)) \geq k_U$.
\label{as:RankAssumption}
\end{assumption}
Intuitively, \cref{as:RankAssumption} states that the proxy $W$ is sufficiently informative about the confounder $U$ in the sense that the domain shifts in $U$ induce sufficient variability in the conditional distribution of $W|X=x$. As we show in \cref{prop:identifiability_2} in \cref{sec:rass1}, \cref{as:RankAssumption} implies that the proxy $W$ can be transformed in such a way that the matrix $P(W|E,x)$ has linearly independent rows. We therefore assume this without loss of generality (see \cref{sec:rass1} for a more detailed treatment).
Finally, this implies that the matrix $P(W|E,x) \, P(W|E,x)^\top$ is invertible \citep{STRANG19801}, so its right pseudoinverse $P(W|E,x)^{\dagger}$ exists.\footnote{Given a matrix $A \in \mathbb{R}^{m\times n}$ s.t.\ $AA^\top$ is invertible, its right pseudoinverse is defined as $A^{\dagger} \coloneqq A^\top(AA^\top)^{-1}$.} Similar assumptions about the rank of conditional probability matrices are common in proxy-based identifiability results \citep[e.g.][]{kuroki2014measurement,miao2018identifying,tsai2024proxy}.

We are now able to state the main identifiability result. Its proof, together with other proofs of this work, can be found in  \cref{app:proofs}.

\begin{restatable}[]{theorem}{identifiability}
    Under the data generating process described in~\cref{ch:ProblemDescription} (ignoring the paragraph ``Generalization'') and assuming \cref{as:RankAssumption},  we have 
    for all ${x \in \supp(X)}$ and $y \in \supp(Y)$:
    \begin{equation}
        q(y|\mathrm{do}(x)) = P(y|E,x) \, P(W|E,x)^{\dagger} \, Q(W).
    \label{eq:formula}
    \end{equation}
    Therefore, if $(E,W,X,Y)$ is observed in the source domains and $W$ in the target domain, the causal effect of $X$ on $Y$ in the target domain $e_T$ is identifiable. 
    \label{prop:identifiability}
\end{restatable}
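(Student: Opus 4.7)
The plan is to start from the backdoor adjustment formula already singled out in the excerpt,
\[
  q(y|\mathrm{do}(x)) \;=\; \sum_{u} q(y|u,x)\, q(u) \;=\; P(y|U,x)\, Q(U),
\]
where the second equality uses invariance of the structural assignment $f_Y$ across domains to identify $q(y|u,x)$ with $p(y|u,x)$. The goal is to show that the right-hand side of~\eqref{eq:formula} equals this quantity, by re-expressing each of its observable factors in terms of the latent $U$ and then cancelling.

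The first stage rewrites the three matrix/vector ingredients through $U$. From the $d$-separation $Y{\indep}_{\mathcal{G}} E \mid (U,X)$ and marginalisation over $U$, one obtains $P(y|E,x) = P(y|U,x)\,P(U|E,x)$; analogously, from $W{\indep}_{\mathcal{G}} (E,X)\mid U$ one obtains $P(W|E,x) = P(W|U)\,P(U|E,x)$. Invariance of $f_W$ across domains gives $q(w|u) = p(w|u)$, and hence $Q(W) = P(W|U)\,Q(U)$.

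The second stage exploits \cref{as:RankAssumption} together with the WLOG reduction mentioned after its statement, so that $P(W|E,x) \in \mathbb{R}^{k_U \times k_E}$ has linearly independent rows and, in particular, $k_W = k_U$. Since $P(W|E,x)$ factors as $P(W|U)\,P(U|E,x)$ with $P(W|U)$ square, the rank inequality $\mathrm{rank}(AB) \le \min(\mathrm{rank}(A),\mathrm{rank}(B))$ forces both $P(W|U)$ and $P(U|E,x)$ to have rank $k_U$. Consequently, $P(W|U)$ is invertible and $P(U|E,x)\,P(U|E,x)^{\top}$ is invertible, so the right pseudoinverse $P(W|E,x)^{\dagger}$ is well defined.

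The conclusion is then a pseudoinverse manipulation. Substituting the decompositions into the right-hand side of~\eqref{eq:formula} yields
\[
  P(y|E,x)\,P(W|E,x)^{\dagger}\,Q(W)
  \;=\; P(y|U,x)\,\bigl[\,P(U|E,x)\,P(W|E,x)^{\dagger}\,P(W|U)\,\bigr]\,Q(U),
\]
and expanding $P(W|E,x)^{\dagger} = P(W|E,x)^{\top}\bigl(P(W|E,x)\,P(W|E,x)^{\top}\bigr)^{-1}$ with $P(W|E,x) = P(W|U)\,P(U|E,x)$ lets the invertible factors $P(W|U)$, $P(W|U)^{\top}$ and $P(U|E,x)\,P(U|E,x)^{\top}$ cancel; the bracketed expression collapses to the $k_U \times k_U$ identity. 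What remains is $P(y|U,x)\,Q(U) = q(y|\mathrm{do}(x))$, completing the argument. The main obstacle is this last algebraic step: one must track dimensions carefully and invoke the WLOG reduction of \cref{as:RankAssumption} to ``$P(W|E,x)$ has linearly independent rows'' before writing $P(W|E,x)^{\dagger}$. Everything else is either a standard invariance argument or a direct consequence of $d$-separation in $\mathcal{G}$, so no new technical machinery is required beyond care with matrix identities.
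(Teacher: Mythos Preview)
Your proposal is correct and follows essentially the same approach as the paper: both use the factorisations $P(y|E,x)=P(y|U,x)P(U|E,x)$ and $P(W|E,x)=P(W|U)P(U|E,x)$ from the relevant $d$-separations, the rank argument forcing $k_W=k_U$ and $P(W|U)$ invertible, and the invariance $Q(W)=P(W|U)Q(U)$ to collapse everything to the adjustment formula. The only cosmetic difference is order of presentation---the paper first isolates $P(y|U,x)=P(y|E,x)P(W|E,x)^{\dagger}P(W|U)$ via $P(W|E,x)P(W|E,x)^{\dagger}=I$ and then substitutes into $P(y|U,x)Q(U)$, whereas you expand the target expression directly and cancel; the algebraic content is identical.
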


\begin{remark}
If, within our setting, \cref{as:RankAssumption} does not hold, identifiability is, in general, lost (see \cref{app:Asm1Ident} for a counterexample). 
Thus, in this sense, \cref{as:RankAssumption} is necessary for \cref{eq:formula} to hold.
\end{remark}

\subsection{Extensions of the identifiability result}\label{sec:extensions}
\paragraph{Relaxing assumptions of \cref{prop:identifiability} 
and special cases.}
The result still holds (with an analogous proof) if $X$ and $Y$ are not discrete but continuous. In this case, $P(y|E,x)$ is a vector of evaluations of a conditional probability density function (pdf),
for example.  
Furthermore, we do not assume faithfulness,
and \cref{prop:identifiability}
still holds if there is no edge between $W$ and $Y$.
Moreover, the latent shift assumption~\citep{alabdulmohsin2023adapting} can be relaxed to also allow for (unmediated) covariate shift via $E\to X$, see~\cref{remark:necessary_conditions} in~\cref{app:ObservedConfounder} for a characterisation of the necessary graphical assumptions.

\paragraph{Incorporating covariates/observed confounders.}
In practice, we often observe additional covariates that we want to include in the model.
For the case of observed confounders $Z$ of $X$ and $Y$, we show that the (target-domain) conditional causal effect $q(y|\mathrm{do}(x),z)$ given $Z=z$ is identified if $Z$ is observed in both the source and target domains and subject to an assumption similar to \cref{as:RankAssumption} for the matrix $P(W|E,x,z)$, see \cref{as:ObservedConfounder} and \cref{thm:id_obs_confounder} in \cref{app:ObservedConfounder} for details. 

\paragraph{(Conditional) average treatment effects.}
\looseness-1
\cref{prop:identifiability} 
directly implies that
the (target-domain) average treatment effect (ATE), given by $\mathbb{E}[Y|\mathrm{do}(X\coloneqq1)]-\mathbb{E}[Y|\mathrm{do}(X\coloneqq0)]$ under $\mathbb{Q}$, is identifiable, too.
Moreover, due to the identifiability of the causal effect conditional on observed confounders $Z=z$, we can also identify the (target-domain) conditional average treatment effect (CATE), given by $\mathbb{E}[Y|\mathrm{do}(X\coloneqq1),Z=z]-\mathbb{E}[Y|\mathrm{do}(X\coloneqq0),Z=z]$ under $\mathbb{Q}$.

\paragraph{Continuous proxy.}
In \cref{app:ContinuousProxy}, we discuss an extension to the case where the proxy $W$ is continuous. The proposed approach is based on the idea that suitable discretisation can yield a discrete proxy variable that satisfies~\cref{as:RankAssumption}, see~\cref{as:233} and~\cref{prop:identifiability_3} for details.

\section{Estimation and inference} 
\label{ch:estimation}

Having established the identifiability of $q(y | \mathrm{do}(x))$, we now construct estimators for this causal effect. Specifically, we propose two estimators based on different decompositions of the causal effect, denoted by $ \widehat{q}_{C,n}(y | \mathrm{do}(x))$ and $\widehat{q}_{R,n}(y | \mathrm{do}(x))$, respectively. The subscript $n$ indicates the sample size used in each case.
By a slight abuse of notation, we use the same symbols to denote the estimators and their values evaluated on a specific dataset.
Code implementing both estimators, along with the experiments, is available on \url{https://github.com/manueligal/proxy-intervention}.

\subsection{Causal parametrisation 
} \label{sec:Causal parametrisation}
As discussed in \cref{ch:ProblemDescription}, the structural assignments in \cref{eq:SCM} can be 
represented 
by the matrices $P(U|E)$, $Q(U)$, $P(W|U)$, $P(X|U)$ and $\{P(y|U,W,x)\}_{x \in \supp(X), y \in \supp(Y)}$. 
For our first approach we explicitly parametrise these matrices, thereby capturing the underlying causal mechanisms.
\cref{prop:CausalDecomposition} shows that the estimand $q(y|\mathrm{do}(x))$ can be expressed in terms of a subset of these (unknown) matrices.
\begin{restatable}[]{proposition}{causaldecomposition}
\label{prop:CausalDecomposition}
For all $x \in \supp(X)$ and $y \in \supp(Y)$ we have\footnote{\looseness-1 The diag operator applied to a square matrix $A=(A_{i,j})_{1\leq i,j \leq n} \in \mathbb{R}^{n\times n}$ is defined as $\mathrm{diag}(A)\coloneqq(A_{1,1},A_{2,2},\dots,A_{n,n}) \in \mathbb{R}^{n}$.}
\begin{equation}
q(y|\mathrm{do}(x)) = \mathrm{diag} \Bigl( P(y|U,W,x) \, P(W|U) \Bigr) \, Q(U).
\end{equation}
\end{restatable}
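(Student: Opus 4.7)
The plan is to derive the identity by combining the truncated factorisation in the target interventional distribution with the invariance of the structural mechanisms across domains, and then to recognise the resulting expression as the diagonal of a matrix product.

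\textbf{Step 1: Truncated factorisation.} Under the SCM in~\eqref{eq:SCM}, applying $\mathrm{do}(E := e_T)$ and $\mathrm{do}(X := x)$ only removes the assignments for $E$ and $X$; the assignments for $W$ and $Y$ are untouched. Hence, writing the truncated factorisation for $\mathbb{Q}^{\mathrm{do}(X:=x)}$ over $(U,W,Y)$ and marginalising over $U$ and $W$,
\begin{equation*}
    q(y|\mathrm{do}(x)) \;=\; \sum_{u}\sum_{w} q(u)\, q(w|u)\, q(y|u,w,x).
\end{equation*}

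\textbf{Step 2: Invariance across domains.} Since $f_W$ does not depend on $E$, we have $q(w|u) = p(w|u)$ for all $u,w$; similarly, since $f_Y$ does not depend on $E$, we have $q(y|u,w,x) = p(y|u,w,x)$ for all $u,w,x,y$. The plan is to substitute these invariances into the expression from Step~1, obtaining
\begin{equation*}
    q(y|\mathrm{do}(x)) \;=\; \sum_{u} q(u) \sum_{w} p(y|u,w,x)\, p(w|u).
\end{equation*}

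\textbf{Step 3: Matrix identification.} Using the notation from~\cref{ch:ProblemDescription}, $P(y|U,W,x)$ is the $k_U\times k_W$ matrix with $(u,w)$-entry $p(y|u,w,x)$, and $P(W|U)$ is the $k_W \times k_U$ matrix with $(w,u)$-entry $p(w|u)$. The inner sum in Step~2 is therefore precisely the $(u,u)$-diagonal entry of the product $P(y|U,W,x)\,P(W|U)$. Reading this off as a row vector via the $\mathrm{diag}$ operator and pairing with the column vector $Q(U)$ yields
\begin{equation*}
    q(y|\mathrm{do}(x)) \;=\; \mathrm{diag}\!\bigl(P(y|U,W,x)\,P(W|U)\bigr)\, Q(U),
\end{equation*}
as claimed.

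\textbf{Main obstacle.} The calculation itself is straightforward; the only care needed is notational, namely checking that row/column conventions for $P(y|U,W,x)$ and $P(W|U)$ align so that the matrix product's diagonal produces the desired quantity $\sum_w p(y|u,w,x)\,p(w|u)$ for each fixed $u$, and that the invariances invoked in Step~2 are exactly those justified by the structural form~\eqref{eq:SCM}, rather than requiring any additional faithfulness assumption.
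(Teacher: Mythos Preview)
Your proof is correct and follows essentially the same route as the paper's: both arrive at $q(y|\mathrm{do}(x))=\sum_u q(u)\sum_w p(y|u,w,x)\,p(w|u)$ via modularity/invariance and then recognise the inner sum as the diagonal of $P(y|U,W,x)\,P(W|U)$. The only cosmetic difference is that the paper first applies covariate adjustment over $U$ alone (giving $P(y|U,x)\,Q(U)$) and then expands $p(y|u,x)$ over $W$ using $W\indep X\mid U$, whereas you write the full truncated factorisation over $(U,W,Y)$ in one step; both are immediate consequences of the SCM structure in~\eqref{eq:SCM}.
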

\looseness-1 We now proceed by maximum likelihood estimation. Consider a parameter $\theta$ whose components are the entries of $P(U|E)$, $Q(U)$, $P(W|U)$, $P(X|U)$ and $\{P(y|U,W,x)\}_{x \in \supp(X), y \in \supp(Y)}$. 
By an abuse of notation, we denote the entries of $\theta$ by the corresponding probabilities with a subscript $\theta$, i.e.,
\begin{equation}
\label{eq:ParameterTheta}
\begin{aligned}
\theta \coloneqq \Bigl(&p_{\theta}(u_1|e_1), \dots , p_{\theta}(u_{k_U}|e_{k_E}), q_{\theta}(u_1), \dots, q_{\theta}(u_{k_U}),p_{\theta}(w_1|u_1), \dots , p_{\theta}(w_{k_W}|u_{k_U}), \\
&p_{\theta}(x_1|u_1), \dots, p_{\theta}(x_{k_X}|u_{k_U}), p_{\theta}(y_1|u_1,w_1,x_1), \dots, p_{\theta}(y_{k_Y}|u_{k_U},w_{k_W},x_{k_X}) \Bigr).
\end{aligned}
\end{equation}
\looseness-1 
The entries are restricted to be non-zero, each column of $P(U|E)$, $Q(U)$, $P(W|U)$, and $P(X|U)$ sums to 1, and similarly $\sum_{i=1}^{k_Y} p(y_i|u,w,x)=1$ for all $u \in \supp(U)$, $w \in \supp(W)$, and $x \in \supp(X)$. 
To ensure these constraints are satisfied without explicitly enforcing them during optimisation, we use real-valued parameters (representing logits) and transform them into valid pmfs using the softmax function.

Naturally, the joint distributions of the observed variables $\mathbb{P}_{W,X,Y|E}$ in the source domains and $\mathbb{Q}_{W}$ in the target domain are also parametrised by $\theta$. 
Specifically, 
for all $i\in \{1,\dots,k_Y\}$, $j\in \{1,\dots,k_W\}$, $s\in \{1,\dots,k_X\}$, and $l\in \{1,\dots,k_E\}$, we have that
\begin{align*}
p_{\theta}(y_i,x_s,w_j|e_l) &:= \sum_{r=1}^{k_U} p_{\theta}(y_i|u_r,w_j,x_s) \, p_{\theta}(w_j|u_r) \, p_{\theta}(x_s|u_r) \, p_{\theta}(u_r|e_l)\\
q_{\theta}(w_j) &:= \sum_{r=1}^{k_U} p_{\theta}(w_j|u_r) \, q_{\theta}(u_r),
\end{align*}
where both of these equations hold for the true (conditional) pmfs. 
The (conditional) likelihood over the observed variables is thus given by
\begin{equation}
L(\theta) 
:= \prod_{i=1}^{k_Y}\prod_{s=1}^{k_X}\prod_{j=1}^{k_W}\prod_{l=1}^{k_E} p_{\theta}(y_i,x_s,w_j|e_l)^{n(y_i,x_s,w_j,e_l)} \prod_{j=1}^{k_W} q_{\theta}(w_j)^{n(w_j)},
\label{eq:LikelihoodFunction}
\end{equation}
where $n(y_i,x_s,w_j,e_l)$ denotes the number of observations for the tuple $(y_i,x_s,w_j,e_l)$ in the source domains and $n(w_j)$ denotes the number of observations of $w_j$ in the target domain.

For all $x \in \supp(X)$ and $y \in \supp(Y)$, we define the function
\begin{equation}
g_{x,y}: 
\theta  \longmapsto g_{x,y}(\theta) \coloneqq \mathrm{diag} \Bigl( P_{\theta}(y|U,W,x) \, P_{\theta}(W|U) \Bigr) \, Q_{\theta}(U),
\label{eq:gFunction}
\end{equation}
where the entries of these matrices are components of $\theta$. 
We denote by $\theta_{0}$ the true value of $\theta$, so by \cref{prop:CausalDecomposition} we have that $g_{x,y}(\theta_{0}) = q(y|\mathrm{do}(x))$. 
Therefore, if we denote by $\widehat{\theta}_{n}$ a maximizer of the likelihood function in \cref{eq:LikelihoodFunction}, we define the plug-in estimator of the causal effect in the causal parametrisation for all $x \in \supp(X)$ and $y \in \supp(Y)$ as
$\widehat{q}_{C,n}(y|\mathrm{do}(x)) \coloneqq g_{x,y}(\widehat{\theta}_{n})$.

Even if the estimator $\widehat{\theta}_{n}$ does not converge, its induced probabilities over the observed variables concentrate around the 
true values, and the estimator $\widehat{q}_{C,n}(y|\mathrm{do}(x))$
is consistent.
\begin{restatable}[]{proposition}{causalconsistency}
\label{prop:CausalConsistency}
Suppose that \cref{as:RankAssumption} holds. The estimator $\widehat{q}_{C,n}(y|\mathrm{do}(x))$ is consistent, that is,
    $$
    \widehat{q}_{C,n}(y|\mathrm{do}(x)) \; \rightarrow \; q(y|\mathrm{do}(x)) \text{ in probability as } n_{\src}, n_{\tgt} \rightarrow \infty.
    $$
\end{restatable}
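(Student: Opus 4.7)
The plan is to sidestep potential non-convergence of $\widehat{\theta}_n$ itself (caused by label-switching on the latent $U$ and other symmetries of the causal parametrisation in \cref{eq:ParameterTheta}) by arguing at the level of the \emph{observed} distributions $p_{\widehat{\theta}_n}(y,x,w|e)$ and $q_{\widehat{\theta}_n}(w)$, and then invoking the explicit identifiability formula \eqref{eq:formula} together with continuity.

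First, I would establish that the induced observed marginals at the MLE converge in probability to the true ones. The log-likelihood in \cref{eq:LikelihoodFunction} depends on $\theta$ only through $p_\theta(y,x,w|e)$ and $q_\theta(w)$, and after normalising by the respective sample sizes the two parts converge uniformly (over the compact image of the softmax parametrisation) to
\[
\sum_{e,y,x,w} p(e)\,p(y,x,w|e)\,\log p_\theta(y,x,w|e) \qquad\text{and}\qquad \sum_{w} q(w)\,\log q_\theta(w),
\]
each of which is uniquely maximised at the true marginal by Gibbs' inequality. A standard $M$-estimator argument, carried out on the image side rather than on $\theta$ itself, then yields $p_{\widehat{\theta}_n}(\cdot|e) \to p(\cdot|e)$ and $q_{\widehat{\theta}_n}(\cdot) \to q(\cdot)$ in probability.

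Second, I would show that $g_{x,y}(\widehat{\theta}_n)$ equals the identifiability formula applied to these induced marginals. The derivation behind \cref{prop:identifiability} goes through verbatim for any parameter $\theta$ whose induced distribution satisfies \cref{as:RankAssumption}, yielding
\[
g_{x,y}(\theta) \;=\; P_\theta(y|E,x)\,P_\theta(W|E,x)^{\dagger}\,Q_\theta(W)
\]
whenever $\mathrm{rank}\bigl(P_\theta(W|E,x)\bigr)\geq k_U$. Since this rank condition holds at the truth and rank is lower semicontinuous, step one implies it is preserved at $\widehat{\theta}_n$ with probability tending to one. On the set where the rank condition holds, the right-hand side is a continuous function of the observed marginals (the pseudoinverse is continuous on the set of matrices of constant rank), so the continuous mapping theorem combined with \cref{prop:identifiability} gives $\widehat{q}_{C,n}(y|\mathrm{do}(x)) \to q(y|\mathrm{do}(x))$ in probability.

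The main obstacle is step one: because the map $\theta \mapsto (p_\theta,q_\theta)$ is non-injective, classical Wald-type consistency for $\widehat{\theta}_n$ is unavailable, and the consistency argument must instead be run on the push-forward to observed marginals while ensuring that the relevant uniform convergence and compactness properties survive the softmax reparametrisation. Verifying that \cref{as:RankAssumption} is inherited by $\widehat{\theta}_n$ asymptotically—so that the formula from step two genuinely applies—is a secondary but essential point.
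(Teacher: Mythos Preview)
Your two-step plan coincides with the paper's: first show that the observed marginals induced by $\widehat{\theta}_n$ converge in probability to the true ones, then pass to $q(y|\mathrm{do}(x))$ via the identifiability formula~\eqref{eq:formula} and the continuous mapping theorem. Your Step~2 is in fact argued more carefully than in the paper, which simply applies the continuous mapping theorem to a map $\tilde g_{x,y}$ built from~\eqref{eq:formula} without explicitly verifying that $g_{x,y}(\widehat\theta_n)=\tilde g_{x,y}(f(\widehat\theta_n))$ or that the rank of $P_{\widehat\theta_n}(W|E,x)$ is eventually preserved.

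The substantive difference is in Step~1. Your appeal to a ``standard $M$-estimator argument'' over the ``compact image of the softmax parametrisation'' has a gap: the softmax maps onto the \emph{open} simplex, so neither the domain nor the image of $\theta\mapsto(p_\theta,q_\theta)$ is compact, and the log-likelihood diverges at the boundary, so uniform convergence over a compact set is not directly available. The paper sidesteps this with a sandwich argument. Writing $\widehat\mu_n$ for the empirical frequencies (the unconstrained maximiser of $\ell_n$ over \emph{all} distributions) and $\mu_0=f(\theta_0)$, one has
\[
\ell_n(\widehat\mu_n)\;\geq\;\ell_n\bigl(f(\widehat\theta_n)\bigr)\;\geq\;\ell_n(\mu_0),
\]
the first inequality because $\widehat\mu_n$ is the global maximiser, the second because $\widehat\theta_n$ maximises $\ell_n\circ f$ and $\mu_0$ lies in the image of $f$. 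After normalising by the smallest per-domain sample size, the outer gap tends to zero by the law of large numbers and continuity, and the inner gap is then shown to dominate a sum of KL divergences $\sum_l D_{\mathrm{KL}}\bigl(\widehat\mu_n^{e_l}\,\|\,f(\widehat\theta_n)^{e_l}\bigr)$. Pinsker's inequality converts this into $\|f(\widehat\theta_n)-\widehat\mu_n\|_1\to 0$ in probability, hence $f(\widehat\theta_n)\to\mu_0$. This route needs no compactness and no uniform convergence; if you stick with an $M$-estimator argument you will need an additional device to rule out $f(\widehat\theta_n)$ escaping toward the boundary.
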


\subsection{Reduced parametrisation} \label{sec:Reduced parametrisation}
\looseness-1 
Although $q(y|\mathrm{do}(x))$ is identifiable, not all the components of the vector $\theta$ that we optimise in the causal parametrisation can be identified from the observed data -- the estimator is based on an overparametrisation of the problem.
Our second approach estimates only the components necessary to compute \cref{eq:formula}.
To do so, 
for all $x \in \supp(X)$ and $y \in \supp(Y)$,
we 
consider one 
long parameter vector
$\eta_{x,y}$ describing all entries of\footnote{\looseness-1 
To simplify notation (for inference), in this section we model the sample as an i.i.d.\ sample $(E_1,W_1,X_1,Y_1),\ldots,(E_n,W_n,X_n,Y_n)$ 
with $E_i \in \{e_1, \ldots, e_{k_E}\}
\cup \{e_T\}$; whenever $E_i = e_T$, the entries of $X$ and $Y$ are missing.
This way, $n$ is fixed, while $n_{\src} := \sum_i \mathds{1}(E_i \neq e_T)$ and $n_{\tgt} := \sum_i \mathds{1}(E_i = e_T)$ are random. 
For all $j \in \{1,\dots,k_W\}$
, $q(w_j) = q(w_j|e_T)$ can now also be considered a conditional probability.}
$Q(W,e_T)$, $q(e_T)$, $P(W,x,E)$, $P(y,x,E)$, and $P(x, E)$
(see \cref{app:detailseta} for details).
Given an $\eta_{x,y}$, 
we can then not only form the matrices
$Q_{\eta_{x,y}}(W,e_T)$, $q_{\eta_{x,y}}(e_T)$, $P_{\eta_{x,y}}(W,x,E)$, $P_{\eta_{x,y}}(y,x,E)$ and $P_{\eta_{x,y}}(x, E)$
but also 
$Q_{\eta_{x,y}}(W)$, $P_{\eta_{x,y}}(W|E,x)$, and $P_{\eta_{x,y}}(y|E,x)$,
which we use to define an estimator based on \cref{eq:formula}. 
The components of $\eta_{x,y}$ are probabilities, so we can estimate their true value by empirical frequencies\footnote{Strictly speaking, we consider a slightly modified version: on the events where the matrix in 
\cref{eq:examplePhat}
does not have linearly independent rows, we change $\widehat{\eta}_{x,y,n}$ to ensure that it does. This happens with probability vanishing to zero (see proof of \cref{prop:clt}) but ensures that \cref{eq:ReducedEstimator} is always well-defined.}, that is, 
$\widehat{\eta}_{x,y,n} \coloneqq \frac{1}{n} \sum_{i=1}^{n} \eta_{x,y}^i$, where, for all $i \in \{1,\dots,n\}$,
\begin{equation}\label{eq:Etai}
\begin{aligned}
\eta_{x,y}^i \coloneqq \biggl(
&\mathds{1}(W_i=w_1,E_i=e_T), \dots, \mathds{1}(W_i=w_{k_W-1},E_i=e_T), \mathds{1}(E_i=e_T), \\
&\mathds{1}(W_i=w_1,X_i=x,E_i=e_1), \dots, \mathds{1}(W_i=w_{k_W-1},X_i=x,E_i=e_{k_E}), \\
&\mathds{1}(Y_i=y,X_i=x,E_i=e_1), \dots, \mathds{1}(Y_i=y,X_i=x,E_i=e_{k_E}), \\
&\mathds{1}(X_i=x,E_i=e_1), \dots, \mathds{1}(X_i=x,E_i=e_{k_E})\biggr).
\end{aligned}
\end{equation}
Then, for example,
\begin{equation} \label{eq:examplePhat}
    \left(P_{\widehat{\eta}_{x,y,n}}(W|E,x)\right)_{j,l}
=
\frac{\frac{1}{n} \sum_{i=1}^{n} \mathds{1}(W_i=w_j,X_i=x,E_i=e_l)}{\frac{1}{n} \sum_{i=1}^{n} \mathds{1}(X_i=x,E_i=e_l)}.
\end{equation}
For all $x \in \supp(X)$ and $y \in \supp(Y)$, we define the function 
\begin{equation}
h: 
\eta_{x,y}  \longmapsto h(\eta_{x,y}) \coloneqq P_{\eta_{x,y}}(y|E,x) \, P_{\eta_{x,y}}(W|E,x)^{\dagger} \, Q_{\eta_{x,y}}(W),
\label{eq:hFunction}
\end{equation}
and define
the estimator of the causal effect in the reduced parametrisation as
\begin{equation}
\widehat{q}_{R,n}(y|\mathrm{do}(x)) \coloneqq h(\widehat{\eta}_{x,y,n}).
\label{eq:ReducedEstimator}
\end{equation}

The following proposition proves consistency of $\widehat{q}_{R,n}(y|\mathrm{do}(x))$  and provides confidence intervals.
\begin{restatable}[]{proposition}{clt} \label{prop:clt}
    Suppose that \cref{as:RankAssumption} holds.
    For all $x \in \supp(X)$ and $y \in \supp(Y)$, the estimator $\widehat{q}_{R,n}(y|\mathrm{do}(x))$ defined in \cref{eq:ReducedEstimator} is a consistent estimator of $q(y|\mathrm{do}(x))$.
    Furthermore, 
        \begin{equation*}
            \frac{\sqrt{n}}{\widehat{\sigma}_{x,y}}\Bigl( \widehat{q}_{R,n}(y|\mathrm{do}(x)) - q(y|\mathrm{do}(x)) \Bigr) \xrightarrow{\mathcal{D}} \mathcal{N}(0,1),
        \end{equation*}
        where
        \begin{equation*}
            \widehat{\sigma}^{2}_{x,y} \coloneqq \nabla h(\widehat{\eta}_{x,y,n})^\top \, \widehat{\Sigma}_{x,y} \, \nabla h(\widehat{\eta}_{x,y,n}),
        \end{equation*}
        with $\widehat{\Sigma}_{x,y}$ being the sample covariance matrix of $(\eta^1_{x,y},\dots,\eta^n_{x,y})$.
\end{restatable}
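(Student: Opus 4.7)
The plan is to prove both claims via a standard multivariate delta-method argument, treating $\widehat{\eta}_{x,y,n}$ as an empirical mean. First I would observe that $\widehat{\eta}_{x,y,n}=\tfrac{1}{n}\sum_{i=1}^{n}\eta_{x,y}^{i}$ is the sample mean of i.i.d.\ bounded indicator vectors with expectation $\eta_{x,y,0}$ (the vector of the true joint probabilities appearing in \eqref{eq:Etai}) and covariance $\Sigma_{x,y}:=\mathrm{Cov}(\eta_{x,y}^{1})$. The multivariate strong law and the multivariate CLT yield
\begin{equation*}
\widehat{\eta}_{x,y,n}\xrightarrow{\mathrm{a.s.}}\eta_{x,y,0},\qquad \sqrt{n}\,\bigl(\widehat{\eta}_{x,y,n}-\eta_{x,y,0}\bigr)\xrightarrow{\mathcal{D}}\mathcal{N}\!\bigl(0,\Sigma_{x,y}\bigr).
\end{equation*}

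Next I would verify that $h$, as defined in \eqref{eq:hFunction}, is continuously differentiable on an open neighbourhood of $\eta_{x,y,0}$. It is the composition of: (i)~smooth ratio maps that reconstruct $P_{\eta_{x,y}}(W|E,x)$, $P_{\eta_{x,y}}(y|E,x)$, and $Q_{\eta_{x,y}}(W)$ from the raw probabilities in $\eta_{x,y}$, which are smooth because the full-support assumption from \cref{ch:ProblemDescription} bounds all denominators away from zero at $\eta_{x,y,0}$; (ii)~the right pseudoinverse $A\mapsto A^{\top}(AA^{\top})^{-1}$, which is smooth on matrices of full row rank; and (iii)~matrix products. By \cref{as:RankAssumption} together with the reduction in \cref{prop:identifiability_2}, $P(W|E,x)$ has linearly independent rows at $\eta_{x,y,0}$, so the pseudoinverse is well-defined and smooth there. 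The footnote modification requires a short argument: by continuity of the rank on the open set of full-row-rank matrices, the event that $P_{\widehat{\eta}_{x,y,n}}(W|E,x)$ has dependent rows is eventually disjoint from a neighbourhood of $\eta_{x,y,0}$ and hence has probability tending to zero; on its complement the modified and unmodified estimators coincide, so the modification contributes an $o_P(1)$ perturbation that affects neither the probability limit nor the distributional limit.

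Consistency of $\widehat{q}_{R,n}(y|\mathrm{do}(x))$ then follows from the continuous mapping theorem together with $h(\eta_{x,y,0})=q(y|\mathrm{do}(x))$, which is exactly \cref{prop:identifiability}. Asymptotic normality follows from the multivariate delta method applied at $\eta_{x,y,0}$:
\begin{equation*}
\sqrt{n}\,\bigl(\widehat{q}_{R,n}(y|\mathrm{do}(x))-q(y|\mathrm{do}(x))\bigr)\xrightarrow{\mathcal{D}}\mathcal{N}\!\bigl(0,\sigma^{2}_{x,y}\bigr),\quad \sigma^{2}_{x,y}:=\nabla h(\eta_{x,y,0})^{\top}\,\Sigma_{x,y}\,\nabla h(\eta_{x,y,0}).
\end{equation*}
To replace $\sigma^{2}_{x,y}$ by $\widehat{\sigma}^{2}_{x,y}$, I would use that the sample covariance $\widehat{\Sigma}_{x,y}$ of an i.i.d.\ sample is consistent for $\Sigma_{x,y}$ and that $\nabla h(\widehat{\eta}_{x,y,n})\xrightarrow{P}\nabla h(\eta_{x,y,0})$ by continuity of $\nabla h$; hence $\widehat{\sigma}^{2}_{x,y}\xrightarrow{P}\sigma^{2}_{x,y}$. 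Slutsky's theorem then yields the studentized limit $\mathcal{N}(0,1)$ (under the mild implicit assumption $\sigma^{2}_{x,y}>0$).

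The main obstacle is the differentiability/regularity analysis around the pseudoinverse: carefully exhibiting $h$ as a composition of smooth maps on a neighbourhood of $\eta_{x,y,0}$, and controlling the rank-deficient event on which the footnote modification intervenes. Once this is in place, the delta method and Slutsky are essentially bookkeeping.
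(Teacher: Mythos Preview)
Your proposal is correct and follows essentially the same route as the paper: CLT for the i.i.d.\ indicator vectors $\eta_{x,y}^{i}$, handling the footnote modification by showing the rank-deficient event has probability tending to zero (so the modified and unmodified estimators share the same limit), delta method at $\eta_{x,y,0}$, and Slutsky for the studentized statement. If anything, your treatment is slightly more careful than the paper's, which states only that $h$ is continuous before invoking the delta method, whereas you correctly verify continuous differentiability near $\eta_{x,y,0}$ and flag the implicit positivity assumption $\sigma_{x,y}^{2}>0$.
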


By \cref{prop:clt}, pointwise asymptotic confidence intervals for $q(y|\mathrm{do}(x))$ at level $(1-\alpha)$ are given by
\begin{equation}
\left[ \widehat{q}_{R,n}(y|\mathrm{do}(x))-\frac{\widehat{\sigma}_{x,y}}{\sqrt{n}} \, z_{1-\frac{\alpha}{2}}, \, \widehat{q}_{R,n}(y|\mathrm{do}(x))+\frac{\widehat{\sigma}_{x,y}}{\sqrt{n}} \, z_{1-\frac{\alpha}{2}} \right],
\label{eq:CIconstruction}
\end{equation}
where $z_\beta$ denotes the $\beta$-quantile of a standard normal distribution.

Both $\widehat{q}_{R,n}(y|\mathrm{do}(x))$ 
and the bounds of the confidence interval
can lie outside $[0,1]$ because the data-generating process adds further constraints on the combination of $P(y|E,x)$, $P(W|E,x)$, and $Q(W)$ that are not reflected by the estimator. 
To reduce bias and variance, we thus clip the estimate $\widehat{q}_{R,n}(y|\mathrm{do}(x))$ and the confidence intervals to $[0,1]$.

\section{Simulation experiments} \label{sec:simu}
\subsection{Data generation}
The data generating mechanism can be described using the matrices $P(U|E)$, $Q(U)$, $P(W|U)$, $P(X|U)$ and $\{P(y|U,W,x)\}_{x \in \supp(X), y \in \supp(Y)}$.
We randomly and independently
generate ${M \in \mathbb{N}}$ different sets of such matrices. For each choice, we generate $N \in \mathbb{N}$ i.i.d.\ data sets with sample size $n \in \mathbb{N}$. 
\cref{app:simu-data} provides
details on how we choose the matrices
and the size of the causal effects.

\subsection{Point estimation}
We first compare the point estimates obtained from the two proposed estimators.
To do so, we 
fix values $x \in \supp(X)$ and $y \in \supp(Y)$
and
compute, for several data sets, the absolute estimation errors $|\widehat{q}_{R, n}(y|\mathrm{do}(x))-q(y|\mathrm{do}(x))|$ and $|\widehat{q}_{C, n}(y|\mathrm{do}(x))-q(y|\mathrm{do}(x))|$.

\begin{figure}[t]
\centering
\includegraphics[width=\textwidth]{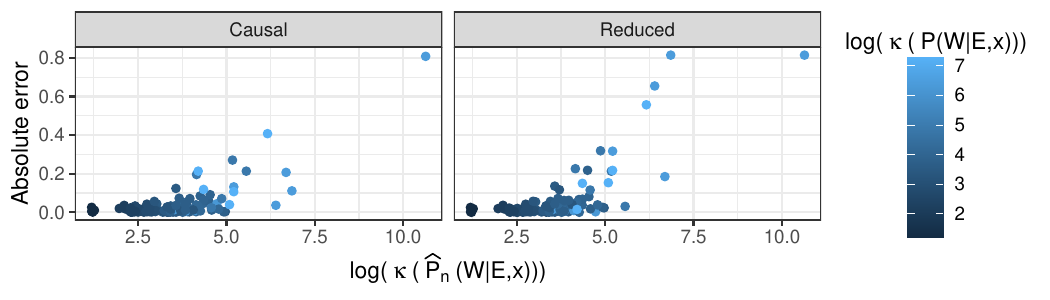}
\caption{\looseness-1 \textbf{Absolute estimation error increases for near non-invertible $P(W|E,x)$}. For $M{=}25$ distinct data generating processes, we draw $N{=}5$ datasets each, consisting of $k_E{=}2$ source domains and $n{=}20\,000$ realizations. The errors for both estimators increase with the condition number $\kappa$ of the matrix $P(W|E,x)$ (or its estimated counterpart), a measure of its non-invertibility.}
\label{fig:PointEstimationComparison}
\end{figure}
    In our setting, we can assess \cref{as:RankAssumption} by considering the condition number $\kappa(P(W|E,x))$. When $\kappa(P(W|E,x))$ is large, the rows of $P(W|E,x)$ are almost linearly dependent, indicating an almost‐violation of \cref{as:RankAssumption}. 
    By~\cref{eq:formula}, 
    we thus expect that 
    the performance of 
    $\widehat{q}_{C,n}(y|\mathrm{do}(x))$ and $\widehat{q}_{R,n}(y|\mathrm{do}(x))$ is sensitive 
    to large conditioning numbers and that for larger conditioning numbers, we require larger sample sizes.
    We validate this empirically in \cref{fig:PointEstimationComparison}. It shows that the absolute estimation errors of both estimators grow with $\kappa(P(W|E,x))$. It also shows that $\kappa(P(W|E,x)) \approx \hat{\kappa}(P(W|E,x))$ so we can estimate $\kappa$ directly from the data. Whenever $\hat{\kappa}(P(W|E,x))$ is large, we can flag the corresponding estimates as potentially unreliable.

The results suggest that both estimation procedures perform similarly in terms of absolute estimation error, with an average value equal to 0.040 for the causal estimator and 0.058 for the reduced estimator, with the reduced estimator being faster to compute (see \cref{app:runtime} for a runtime analysis). \cref{fig:PointEstimationn1e3,fig:PointEstimationn1e5} in \cref{app:simu-point} show the same results for $n=1000$ and $n=100\,000$ supporting the theoretical consistency result. If we add another column to $P(W|E,x)$, that is, another source domain, the reduced estimator seems to perform slightly better than the causal estimator, 
see \cref{fig:pairwise} in \cref{app:simu-point}.

\begin{figure}[t]
    \centering
    \includegraphics[width=\linewidth]{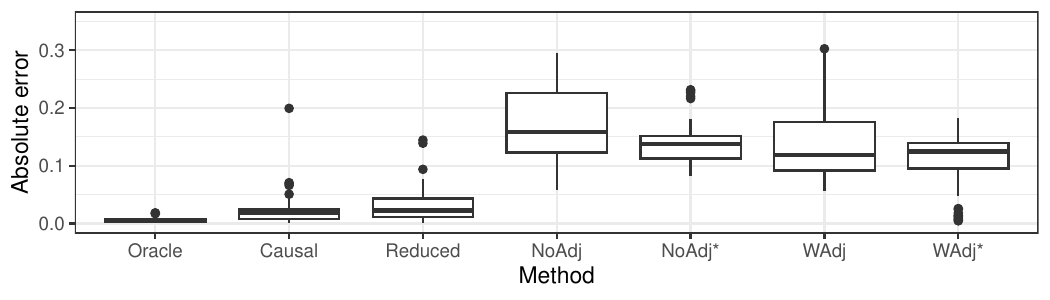}
    \caption{\textbf{Comparison of the different estimation procedures.} We use the parameters $k_E=3$, 
    $n=20\,000$, $M=10$, and $N=5$. The oracle presents the lowest error due to the use of the data from the intervention distribution. The reduced and causal parametrisation estimators show a similar distribution of the error, with their medians close to zero. Their performance is better than that of other baseline methods, whose absolute estimation error distribution is shifted towards larger values. The estimators that use the distribution of $(X,Y)$ in the target domain (with a $^*$) obtain better results in terms of the absolute estimation error than their pooled counterparts.}
    \label{fig:BaselineComparison}
\end{figure}
\looseness-1 We further compare the two estimators against other baseline estimators. 
In particular, we consider an oracle 
which estimates the causal effect from $n$ i.i.d.\ realizations of the target intervention distribution (using data not available to other methods); the no-adjustment estimator, which estimates the distribution $\mathbb{Q}^{\mathcal{C},\mathrm{do}(x)}_{Y}$ by $\mathbb{P}^{\mathcal{C}}_{Y|x}$ (NoAdj) or $\mathbb{Q}^{\mathcal{C}}_{Y|x}$ (NoAdj$^*$; using data not available to other methods); and the $W$-adjustment estimator, which uses the adjustment formula in \cref{eq:formula} but with $W$ instead of $U$ (even though $W$ is not a valid adjustment set) using either pooled data from the source domains (WAdj) or from the target domain (WAdj$^*$; using data not available to other methods). The formal definitions of these estimators can be found in \cref{app:simu-point}. \cref{fig:BaselineComparison} shows the comparison in terms of absolute error. The causal and reduced estimators outperform the baselines different from the oracle, which has the lowest error. 

\subsection{Coverage of confidence intervals}
\cref{prop:clt} proves asymptotic normality of (the unclipped version of)
$\widehat{q}_{R,n}(y|\mathrm{do}(x))$
and provides asymptotically valid confidence intervals. 
We show that the empirical coverage is close to the nominal value and that the 
sample median confidence interval length,
as suggested by consistency,
approaches zero with growing sample size
(see \cref{app:simu-coverage} for details).
For comparison, we have also implemented 
bootstrap confidence intervals based on the
normal approximation method \citep{davison1997bootstrap}, 
see also \cref{app:simu-coverage} for details. 
\cref{fig:mergedfigurecoverage}
shows that the empirical coverage of the bootstrap confidence intervals is slightly closer to the nominal level but comes at the cost of longer confidence intervals. 

\section{Application: Hotel searches}
\label{ch:Application}
\looseness-1
We consider the  Expedia Hotel Searches dataset
\citep{expediapersonalizedsort}, in which each observation corresponds to a query made by a user of Expedia's webpage. It contains information about the user, the search filters, the hotels displayed, and whether the user clicked on or booked any of the shown accommodations.
We are interested in studying the causal effect 
of the position of the hotel in the list shown to the user ($X$)
on
whether the user clicked on the hotel
to obtain more information ($Y$).
We regard it as plausible that there is an unmeasured confounder $U$ of $X$ and $Y$ (e.g., features of the hotel, such as the distance to the city centre or the additional services provided), but that some of this information is reflected in the price ($W$), which we treat as a proxy.
Domains $E$ are comprised of searches in which a given hotel appears.
\cref{tab:VariableDescription} provides an overview of the involved variables and their role relative to the studied setting.
As source domains,
we 
use those hotels that have at least 2000 observations (except for the ones chosen as target domains, see below); this results in 25 hotels.
If several of such hotels appear in the same query, this yields several observations (introducing a small dependence between the observations).

\looseness-1
The dataset can be viewed as containing both observational and interventional data. 
For some searches (the ones we use for our estimators), hotels are sorted according to an existing algorithm (unknown to us). For all other searches, hotels are ranked uniformly at random. The latter part allows us to compute a `ground truth' or `oracle' causal effect (containing a point estimate and Wald confidence interval)
that we can compare against. 
\citet{ursu2018power} uses this dataset to study the effect of the ranking in the customer choice, but only considers the part of the dataset in which hotels are ranked uniformly at random. This is different from our setting, since we estimate the causal effect $q(Y=1|\mathrm{do}(X=x))$ in a target domain (the hotel of interest) from the observational data.
\begin{table}[t]
    \caption{\textbf{Variables included in the hotel ranking real-world application.}}
    \label{tab:VariableDescription}
    \def\arraystretch{1.25}
    \resizebox{\textwidth}{!}{
    \begin{tabular}{l l l}
        \toprule
        \textbf{Variable} & \textbf{Description} & \textbf{Values} \\ 
        \midrule
        $E$ & Hotel unique ID & Integers 
        \\ 
         $X$ & Position of the hotel in the search results & $\{1,\dots,40\}$ \\ 
        $Y$ & Whether the user clicks on the hotel & 0: No click; 1: Click \\ 
        $W$ & Price range per night in USD & 
        $\{[0,75],(75,125],(125,175], (175,225],(225,\infty)\}$
        \\
        \bottomrule
    \end{tabular}}  
\end{table}

In the first experiment, we estimate $q(Y=1|\mathrm{do}(X=1))$.
As target domains, we choose 18 hotels among those hotels
that appear in at least 1500 different queries in the randomized dataset. This ensures that we can obtain reasonable estimates using the oracle, which we consider the ground truth in this experiment.
In total, we obtain ca.\ $64\,000$ and $50\,000$ observations from 8400 and 4000 queries for the source and target domains, respectively.
 \cref{fig:ExpediaPrediction}
 \begin{figure}[t]
    \centering
    \includegraphics[width=0.9\linewidth]{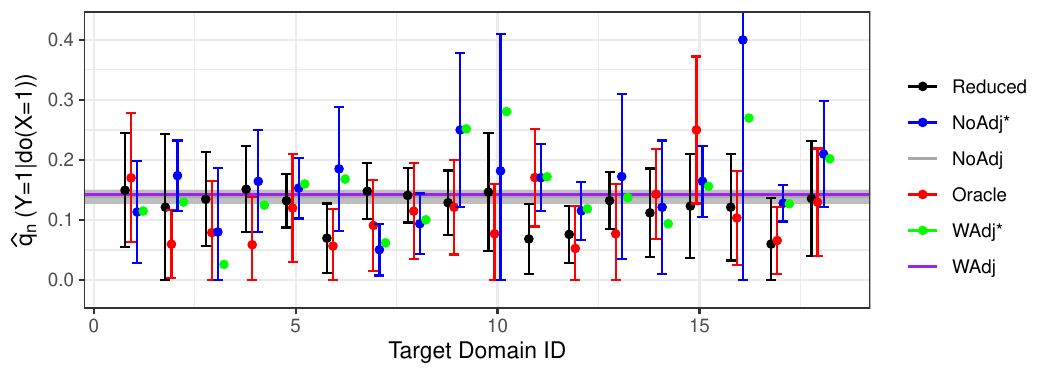}
    \caption{
\looseness-1 \textbf{Reduced parametrisation estimator compared with four baselines and the ground truth for a real dataset.}
We compare the estimates of the causal effect $q(Y=1|\mathrm{do}(X=1))$ using the reduced estimator and the no-adjustment baselines with the oracle confidence intervals using 25 source and 18 target domains. For both of them, there is overlap between their confidence intervals and the ones from the oracle in all the target domains. The reduced estimator yields estimates closer to the oracle in more target domains and slightly smaller confidence intervals than NoAdj*.
}
    \label{fig:ExpediaPrediction}
\end{figure}
 compares the confidence intervals obtained by the reduced estimator  
 with the oracle ones. 
 As a comparison we also show the result of the no-adjustment and W-adjustment baselines (using both versions: pooled and target), see \cref{app:simu-point}, including Wald confidence intervals for the no-adjustment estimator (all confidence intervals in this experiment are at level $0.95$).
Generally, both estimators overlap with the oracle confidence intervals. The reduced estimator yields point estimates that are closer to the oracle estimates (average absolute error of 0.044 (Reduced), 0.051 (NoAdj), 0.080 (NoAdj*), 0.053 (WAdj) and 0.075 (WAdj*)) and confidence intervals that are  on average shorter than the ones from the no-adjustment baseline (median length of 0.14 versus 0.17). 
In a second experiment, we estimate the causal effect 
$q(Y=1|\mathrm{do}(X=x))$
for different values of $x \in \{1, \ldots, 20\}$ 
in a fixed target domain to study the causal effect of the different positions on the clicks of the users.
Again, the confidence intervals produced by the reduced estimator overlap with all oracle ones,  
see \cref{app:real} 
for details. 

\section{Summary and future work} 
\label{s:Summary}
We consider the causal domain adaptation problem of estimating a confounded causal effect in an unseen target domain using data from source domains and an observed proxy variable. 
We provide two estimators, prove consistency and, in one case, provide asymptotically valid confidence intervals. 
Experiments on simulated and real data show that these estimators outperform existing methods and support the theoretical claims. 
We consider the case of a continuous $U$ as an interesting setting, too; here,
the ideas discussed by~\citet{miao2018identifying}
and~\citet{tsai2024proxy}
could prove helpful.

\clearpage
\section*{Acknowledgments}
The project that gave rise to these results received the support of a fellowship from ``la Caixa'' Foundation (ID 100010434). The fellowship code is LCF/BQ/EU23/12010098.

\bibliographystyle{unsrtnat}
{\small
\bibliography{myReferences}
}
\clearpage
\appendix
\addcontentsline{parttoc}{section}{Appendices}
\part{Appendices}
\changelinkcolor{black}{}
\parttoc
\changelinkcolor{BrickRed}{}
\clearpage

\section{Proofs} \label{app:proofs}

\subsection{Proof of Thm. \ref{prop:identifiability}}
\label{app:prop:identifiability}
\identifiability*
\begin{proof}
The global Markov condition holds in SCMs \citep[e.g.,][]{CausalityPearl, LDLL}, so 
we have
$$
X,Y,W \indep E \, | \, U \qquad \qquad  X \indep W \, | \, U.
$$
We then have
\begin{equation}\label{eq:Str1}
P(y|E,x)=P(y|U,x) \, P(U|E,x)
\end{equation}
\begin{equation}\label{eq:Str2}
P(W|E,x)=P(W|U) \, P(U|E,x).
\end{equation}

For every pair of matrices $A \in \mathbb{R}^{m \times n}$ and $B \in \mathbb{R}^{n \times p}$, the following inequality holds: $\mathrm{rank}(AB) \leq \mathrm{min} \{\mathrm{rank}(A),\mathrm{rank}(B)\}$ \citep{MatrixAlgebra}. Using this inequality in \cref{eq:Str2}, we get
\begin{equation*}
\mathrm{rank}(P(W|U)) \geq \mathrm{rank}(P(W|E,x)) = k_W,
\end{equation*}
where the last equality is due to \cref{prop:identifiability_2}. The matrix $P(W|U)$ has dimensions $k_W \times k_U$, hence
\begin{equation}
\min\{k_W,k_U\} \geq \mathrm{rank}(P(W|U)) \geq k_W .
\label{eq:Rank1}
\end{equation}
Using \cref{eq:Rank1} and the inequality $k_W \geq k_U$ derived from \cref{as:RankAssumption}, we conclude that $\mathrm{rank}(P(W|U)) = k_W=k_U$. Therefore, $P(W|U)$ is an invertible matrix and this can be used in \cref{eq:Str2} to obtain
\begin{equation*}
P(U|E,x) = P(W|U)^{-1} \, P(W|E,x) .
\end{equation*}

The substitution of this expression in \cref{eq:Str1} yields
\begin{equation*}
P(y|E,x)=P(y|U,x) \, P(W|U)^{-1} \, P(W|E,x),
\end{equation*}
and therefore
\begin{equation}
\label{eq:PyUx}
P(y|U,x)=P(y|E,x) \, P(W|E,x)^{\dagger} \, P(W|U).
\end{equation}

The adjustment formula \eqref{eq:CovAdj} states that $q(y|\mathrm{do}(x)) = Q(y|U,x) \, Q(U)$ and the equality $Q(y|U,x)=P(y|U,x)$ holds 
because of modularity or invariance \citep[e.g.,][]{peters2017elements}.
Combining \cref{eq:PyUx,eq:CovAdj}, we get
\begin{equation*}
q(y|\mathrm{do}(x)) = P(y|U,x) \, Q(U) = P(y|E,x) \, P(W|E,x)^{\dagger} \, P(W|U) \, Q(U).
\end{equation*}

Using the law of total probability for the last two terms, we get our final result
\begin{equation}
q(y|\mathrm{do}(x)) = P(y|E,x) \, P(W|E,x)^{\dagger} \, Q(W).
\end{equation}
\end{proof}

\subsection{Proof of Prop. \ref{prop:CausalDecomposition}}
\label{app:prop:CausalDecomposition}
\causaldecomposition*
\begin{proof}
By modularity, we have $Q(y|U,x) = P(y|U,x)$. The statement then follows by 
the covariate adjustment formula \eqref{eq:CovAdj}:
\begin{gather*}
q(y|\mathrm{do}(x)) = P(y|U,x) \, Q(U) = \sum_{r=1}^{k_U} p(y|u_r,x) \, q(u_r) \\
= \sum_{r=1}^{k_U} \left( \sum_{j=1}^{k_W} p(y|u_r,w_j,x) \, p(w_j|u_r) \right) q(u_r) = \mathrm{diag} \Bigl( P(y|U,W,x) \, P(W|U) \Bigr) \, Q(U).
\end{gather*}
\end{proof}

\subsection{Proof of Prop. \ref{prop:CausalConsistency}}
\causalconsistency*
\begin{proof}
We consider a parameter whose components are the joint pmfs that appear in the likelihood function in \cref{eq:LikelihoodFunction}
\begin{equation*}
\mu \coloneqq \bigl( p_{\mu}(y_1,x_1,w_1|e_1), \dots, p_{\mu}(y_{k_Y},x_{k_X},w_{k_W}|e_{k_E}), q_{\mu}(w_1), \dots, q_{\mu}(w_{k_W}) \bigr)
\end{equation*}
and denote by $\mu_{0}$ its true value. Moreover, we denote by $f(\widehat{\theta}_{n})$ the vector obtained when calculating these joint pmfs from the optimal vector $\widehat{\theta}_{n}$. Hence,
\begin{equation*}
f(\widehat{\theta}_{n}) = \bigl( p_{\widehat{\theta}_{n}}(y_1,x_1,w_1|e_1), \dots, p_{\widehat{\theta}_{n}}(y_{k_Y},x_{k_X},w_{k_W}|e_{k_E}), q_{\widehat{\theta}_{n}}(w_1), \dots, q_{\widehat{\theta}_{n}}(w_{k_W}) \bigr).
\end{equation*}
The conditional log-likelihood equals
\begin{equation*}
    \ell_n(\mu) 
    := \sum_{i=1}^{k_Y}\sum_{s=1}^{k_X}\sum_{j=1}^{k_W}\sum_{l=1}^{k_E} n(y_i,x_s,w_j,e_l) \log(p_{\mu}(y_i,x_s,w_j|e_l)) + \sum_{j=1}^{k_W} n(w_j) \log(q_{\mu}(w_j)),
\end{equation*}
where we define $0 \cdot \log 0 = 0$. (It suffices to consider the conditional log-likelihood, as it differs from the unconditional log-likelihood only by an additive term that models the marginal probabilities of the domains.) Instead of $\ell_n$, we now consider
\begin{equation*}
    \tilde \ell_n(\mu) 
    := \sum_{i=1}^{k_Y}\sum_{s=1}^{k_X}\sum_{j=1}^{k_W}\sum_{l=1}^{k_E} \frac{n(y_i,x_s,w_j,e_l)}{n(e_l)} \log(p_{\mu}(y_i,x_s,w_j|e_l)) + \sum_{j=1}^{k_W} \frac{n(w_j)}{n_{\tgt}} \log(q_{\mu}(w_j)),
\end{equation*}
which satisfies $\mathrm{argmax} \; \ell_n = \mathrm{argmax} \; \tilde \ell_n$ (this is because $\ell_n$ can be optimized by considering the likelihood for each domain separately,
so multiplying each domain likelihood by a constant $1/n(e_l)$ or $1/n_{\text{tgt}}$ does not change the optimal values for the
probabilities). The unique value for $\mu$ maximizing both $\tilde \ell_n$ and $\ell_n$, denoted by $\widehat{\mu}_{n}$, consists of 
the relative frequencies
\begin{align*}
p_{\widehat{\mu}_{n}}(y_i,x_s,w_j|e_l) &= \frac{n(y_i,x_s,w_j,e_l)}{n(e_l)}, \\
q_{\widehat{\mu}_{n}}(w_j) &= \frac{n(w_j)}{n_{\tgt}}.
\end{align*}
For all $n \in \mathbb{N}$,
\begin{equation}
\ell_n(\widehat{\mu}_{n}) \geq \ell_n(f(\widehat{\theta}_{n})) \geq \ell_n(\mu_{0}).
\label{eq:SqueezeIneq}
\end{equation}
The first inequality is due to the definition $\widehat \mu_{n} \coloneqq \mathrm{argmax} \; \ell_n$. Regarding the second inequality, we have
\begin{equation*}
\widehat \theta_{n} \coloneqq \mathrm{argmax} \; L = \mathrm{argmax} \; \log L = \mathrm{argmax} \; \ell_n \circ f,
\end{equation*}
(where the last equality is due to $\log L = \ell_n \circ f$, which holds by definition of $f$); and since there exists $\theta_0$ such that $\mu_{0} = f(\theta_{0})$, we have
\begin{equation*}
\ell_n(f(\widehat \theta_{n})) \geq \ell_n(f(\theta_{0})) = \ell_n(\mu_{0}).
\end{equation*}
We have that $\widehat{\mu}_{n} \xrightarrow{P} \mu_{0}$ due to the law of large numbers and we use the continuous mapping theorem to get
\begin{equation*}
\frac{\ell_n(\widehat{\mu}_{n}) - \ell_n(\mu_{0})}{n_{\min}} \xrightarrow{P} 0,
\end{equation*}
where $n_{\min} \coloneqq \min\{n(e_1),\dots,n(e_{k_E}),n_\tgt\}$ is the smallest sample size across all source and target domains. Using the squeeze theorem on \cref{eq:SqueezeIneq},
\begin{equation*}
\frac{\ell_n(\widehat{\mu}_{n}) - \ell_n(f(\widehat{\theta}_{n}))}{n_{\min}} \xrightarrow{P} 0.
\end{equation*}
Furthermore, we have the inequality
\begin{equation*}
\frac{\ell_n(\widehat{\mu}_{n}) - \ell_n(f(\widehat{\theta}_{n}))}{n_{\min}} \geq \tilde \ell_n(\widehat{\mu}_{n}) - \tilde \ell_n(f(\widehat{\theta}_{n})).
\end{equation*}
To prove this inequality, we consider the decomposition
\begin{equation*}
\ell_n (\mu) \eqqcolon \sum_{l=1}^{k_E} \ell_{n}^{e_l}(\mu) + \ell_{n}^{e_T}(\mu)
\end{equation*}
corresponding to the sum of the likelihoods in each domain. For all $l \in \{1,\dots,k_E,T\}$, the vector $\widehat{\mu}_{n}$ is the maximizer of $\ell_{n}^{e_l}$, so we have
\begin{equation}
\ell_{n}^{e_l}(\widehat{\mu}_{n})-\ell_{n}^{e_l}(f(\widehat{\theta}_{n})) \geq 0.
\label{eq:Ineq0}
\end{equation}
Therefore,
\begin{align*}
& \tilde{\ell}_{n}(\widehat{\mu}_{n}) - \tilde{\ell}_{n}(f(\widehat{\theta}_{n})) \\
& = \sum_{i=1}^{k_Y}\sum_{s=1}^{k_X}\sum_{j=1}^{k_W}\sum_{l=1}^{k_E} \frac{n(y_i,x_s,w_j,e_l)}{n(e_l)} \log \left( \frac{p_{\widehat{\mu}_{n}}(y_i,x_s,w_j|e_l)}{p_{\widehat{\theta}_{n}}(y_i,x_s,w_j|e_l)}\right) + \sum_{j=1}^{k_W} \frac{n(w_j)}{n_{\tgt}} \log\left(\frac{q_{\widehat{\mu}_{n}}(w_j)}{q_{\widehat{\theta}_{n}}(w_j)}\right) \\
& = \sum_{l=1}^{k_E} \frac{1}{n(e_l)} \left( \ell_n^{e_l}(\widehat{\mu}_{n}) - \ell_{n}^{e_l}(f(\widehat{\theta}_{n})) \right) + \frac{1}{n_{\text{tgt}}} \left( \ell_{n}^{e_T}(\widehat{\mu}_{n}) - \ell_{n}^{e_T}(f(\widehat{\theta}_{n})) \right) \\
& \leq \frac{1}{n_{\text{min}}} \left( \sum_{l=1}^{k_E} \left( \ell_n^{e_l}(\widehat{\mu}_{n}) - \ell_{n}^{e_l}(f(\widehat{\theta}_{n})) \right) + \left( \ell_{n}^{e_T}(\widehat{\mu}_{n}) - \ell_{n}^{e_T}(f(\widehat{\theta}_{n})) \right) \right) \\
& = \frac{\ell_n(\widehat{\mu}_{n}) - \ell_n(f(\widehat{\theta}_{n}))}{n_{\text{min}}},
\end{align*}
where the inequality holds due to \cref{eq:Ineq0}. Therefore, considering the Kullback-Leibler divergence, we get the following convergence in probability\footnote{Here $\widehat{\mu}_{n}^{e_1} \coloneqq (p_{\widehat\mu_{n}}(y_1,x_1,w_1|e_1), \dots, p_{\widehat\mu_{n}}(y_{k_Y},x_{k_X},w_{k_W}|e_1))$ and analogously for the other variables.}:
\begin{equation*}
    \sum_{l = 1}^{k_E} D_{\mathrm{KL}}\left( \widehat{\mu}_{n}^{e_l} || f(\widehat{\theta}_{n})^{e_l}\right) + D_{\mathrm{KL}}\left( \widehat{\mu}_{n}^{e_T} || f(\widehat{\theta}_{n})^{e_T}\right)= \tilde \ell_n(\widehat{\mu}_{n}) - \tilde \ell_n(f(\widehat{\theta}_{n})) \xrightarrow{P} 0,
\end{equation*}
which implies that 
each of the summands on the left-hand side converges to zero in probability. Then, by Pinsker's inequality and the continuous mapping theorem, we have
    \begin{equation*}
        \|\widehat{\mu}_{n} - f(\widehat{\theta}_{n})\|_1 \leq \sum_{l = 1}^{k_E} \sqrt{2D_{\mathrm{KL}}\left( \widehat{\mu}_{n}^{e_l} || f(\widehat{\theta}_{n})^{e_l}\right)} + \sqrt{2D_{\mathrm{KL}}\left( \widehat{\mu}_{n}^{e_T} || f(\widehat{\theta}_{n})^{e_T}\right)} \xrightarrow{P} 0.
    \end{equation*}
Hence,
\begin{equation*}
f(\widehat{\theta}_{n}) \xrightarrow{P} \mu_{0}.
\end{equation*}
Finally, we apply the continuous mapping theorem with the function $\tilde{g}_{x,y}: \mu_{0}  \longmapsto \tilde{g}_{x,y}(\mu_{0}) \coloneqq q(y|\mathrm{do}(x))$ (defined, for example, using \cref{eq:formula}, where all the components can be derived from the joint pmfs in $\mu_{0}$) to get the final result:
\begin{equation*}
    \widehat{q}_{C,n}(y|\mathrm{do}(x)) \xrightarrow{P} q(y|\mathrm{do}(x)).
\end{equation*}
\end{proof}
\subsection{Proof of Prop. \ref{prop:clt}}
\label{app:prop:clt}
\clt*
\begin{proof}
Consider the i.i.d.\ sample $\{\eta^i_{x,y}\}_{i=1}^{n}$ obtained from $\{(E_i,W_i,X_i,Y_i)\}_{i=1}^{n}$ using the definition in \cref{eq:Etai}.
    The random vector 
    $\eta_{x,y}^i$
has mean equal to the true parameter $\eta_{x,y,0}$. We define $\widehat{\eta}^*_{x,y,n}$ to be the sample mean of 
    $\eta_{x,y}$.
Here, we use 
the asterisk to 
stress the difference to the estimator 
$\widehat{\eta}_{x,y,n}$, which is equal 
to 
$\widehat{\eta}^*_{x,y,n}$
except for when 
$P_{\widehat{\eta}^*_{x,y,n}}(W|E,x)$
does not have linearly independent rows. 

To prove asymptotic normality, we apply the central limit theorem to $\{\eta^i_{x,y}\}_{i=1}^{n}$. We use again that their mean is equal to $\eta_{x,y,0}$ and their sample mean is the estimator $\widehat{\eta}^{*}_{x,y,n}$. Furthermore, we denote by $\Sigma_{x,y}$ the covariance matrix of $\eta_{x,y}^1$, obtaining
\begin{equation}
\sqrt{n} \Bigl( \widehat{\eta}^{*}_{x,y,n} - \eta_{x,y,0} \Bigr) \xrightarrow{\mathcal{D}} \mathcal{N}(0,\Sigma_{x,y}).
\label{eq:etaCLT}
\end{equation}

We now prove that 
\begin{equation}
\sqrt{n} \Bigl( \widehat{\eta}_{x,y,n} - \eta_{x,y,0} \Bigr) \xrightarrow{\mathcal{D}} \mathcal{N}(0,\Sigma_{x,y})
\label{eq:etaCLT2}
\end{equation}
holds, too.
Let $B_n$ be the event that
$P_{\widehat{\eta}^*_{x,y,n}}(W|E,x)$
has linearly independent rows. We first show
that $\mathbb{P}(B_n) \rightarrow 1$.
Indeed, by \cref{as:RankAssumption} and \cref{prop:identifiability_2},
the population matrix 
$P(W|E,x)$
has full row rank. Thus, there is a $k_W \times k_W$ submatrix $M$ s.t.\ $\mathrm{det}(P(W|E,x)_M) = c \neq 0$. 
By continuity of the determinant, there is a $\delta > 0$ s.t.\footnote{$\| \cdot\|_F$ denotes the Frobenius norm.} 
$\|P(W|E,x)_{M} - P_{\widehat{\eta}^*_{x,y,n}}(W|E,x)_{M}\|_F < \delta$ (let us call such events $A_{n, \delta}$) implies 
$\mathrm{det}(P_{\widehat{\eta}^*_{x,y,n}}(W|E,x)_{M})\neq 0$, 
so
    $P_{\widehat{\eta}^*_{x,y,n}}(W|E,x)$ has
    full row rank. By the law of large numbers, $\mathbb{P}(A_{n, \delta}) \rightarrow 1$ and thus 
        $\mathbb{P}(B_{n}) \rightarrow 1$.
    Defining 
$\hat a_n := 
\sqrt{n} ( \widehat{\eta}_{x,y,n} - \eta_{x,y,0} )$
and 
$\hat a_n^* := 
\sqrt{n} ( \widehat{\eta}^*_{x,y,n} - \eta_{x,y,0} )$,
we can now prove
$\hat a_n \xrightarrow{\mathcal{D}} \mathcal{N}(0,\Sigma_{x,y})$.
Indeed, for all $\mathbf{b}$, we have
\begin{align*}
    \lim_{n \rightarrow \infty}
    \mathbb{P}(\hat a_n \leq \mathbf{b})
    &= 
     \lim_{n \rightarrow \infty}\left(
     \mathbb{P}(\hat a_n \leq \mathbf{b} \cap B_n) + \mathbb{P}(\hat a_n \leq \mathbf{b} \cap B_n^C)\right)\\
    &=
     \lim_{n \rightarrow \infty} \left(
    \mathbb{P}(\hat a_n^* \leq \mathbf{b} \cap B_n) + \mathbb{P}(\hat a_n \leq \mathbf{b} \cap B_n^C) +
    \mathbb{P}(\hat a_n^* \leq \mathbf{b} \cap B_n^C)
    \right)\\
    &=
     \lim_{n \rightarrow \infty}\left(
    \mathbb{P}(\hat a_n^* \leq \mathbf{b}) + \mathbb{P}(\hat a_n \leq \mathbf{b} \cap B_n^C)\right)\\
    &=
     \lim_{n \rightarrow \infty}
    \mathbb{P}(\hat a_n^* \leq \mathbf{b}),
\end{align*}
which proves \cref{eq:etaCLT2}.
The second equality holds because, on $B_n$, $\hat{a}_n^*$ and $\hat{a}_n$ are identical and 
$\lim_{n \rightarrow \infty} 
\mathbb{P}(\hat a_n^* \leq \mathbf{b} \cap B_n^C) = 0$.

The function $h$ defined in \cref{eq:hFunction} is continuous, so we apply the delta method to \cref{eq:etaCLT2} 
to get
\begin{equation*}
\sqrt{n}\Bigl( \widehat{q}_{R,n}(y|\mathrm{do}(x)) - q(y|\mathrm{do}(x)) \Bigr) \xrightarrow{\mathcal{D}} \mathcal{N}(0,\sigma^{2}_{x,y}),
\end{equation*}
where the variance $\sigma^2_{x,y}$ is given by
\begin{equation*}
\sigma^2_{x,y} = \nabla h(\eta_{x,y,0})^\top \, \Sigma_{x,y} \, \nabla h(\eta_{x,y,0}).
\end{equation*}
Consistency follows by Slutsky's theorem.

To estimate $\sigma^2_{x,y}$, we can replace
$\Sigma_{x,y}$ by the sample covariance matrix $\widehat{\Sigma}_{x,y,n}$ of $(\eta^1_{x,y},\dots,\eta^n_{x,y})$ and $\eta_{x,y,0}$ by $\widehat{\eta}_{x,y,n}$, which 
are both consistent estimators by the law of large numbers. This
yields
\begin{equation*}
\widehat\sigma^2_{x,y} \coloneqq \nabla h\left(\widehat{\eta}_{x,y,n}\right)^\top \, \widehat{\Sigma}_{x,y,n} \, \nabla h\left(\widehat{\eta}_{x,y,n}\right).
\end{equation*}
By the continuous mapping theorem, $\widehat\sigma^2_{x,y}$ is a consistent estimator of $\sigma^2_{x,y}$. Therefore, using the continuous mapping theorem again, together with Slutsky's theorem, we have
\begin{equation*}
    \frac{\sqrt{n}}{\widehat{\sigma}_{x,y}}\Bigl( \widehat{q}_{R,n}(y|\mathrm{do}(x)) - q(y|\mathrm{do}(x)) \Bigr) \xrightarrow{\mathcal{D}} \mathcal{N}(0,1).
\end{equation*}
\end{proof}

\section{Details and extensions of theoretical results}
\label{app:extensions}
In this appendix, we provide several additional results that highlight how the assumptions and the setting studied in the main paper can be relaxed.
\subsection{Relaxing Asm. \ref{as:RankAssumption}}
\label{sec:rass1}
Under \cref{as:RankAssumption}, the proxy $W$ can be transformed in such a way that the matrix $P(W|E,x)$ has linearly independent rows.
\begin{proposition}
    Under \cref{as:RankAssumption}, for all ${x \in \supp(X)}$, there exists a $W$-measurable random variable $\tilde{W}$ such that
    \begin{equation*}
        k_{\tilde{W}} = \mathrm{rank}(P(\tilde{W}|E,x)) = \mathrm{rank}(P(W|E,x))
    \end{equation*}
and $\tilde{W}$ satisfies the independence statements $(X,Y,\tilde{W} \indep E \, | \, U)$ and $(X \indep \tilde{W} \, | \, U)$, and $\supp((E,U,\tilde{W},X,Y)) = \supp(E) \times \supp(U) \times \supp(\tilde{W}) \times \supp(X) \times \supp(Y)$.
\label{prop:identifiability_2}
\end{proposition}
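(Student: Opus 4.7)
The plan is to construct $\tilde W$ as a deterministic function of $W$ that collapses $\supp(W)$ onto exactly $r := \mathrm{rank}(P(W|E,x))$ values, chosen so that the resulting matrix $P(\tilde W \mid E,x)$ has full row rank $r$. Since $\tilde W$ is $W$-measurable, the two conditional independences $(X,Y,\tilde W)\indep E \mid U$ and $X \indep \tilde W \mid U$ will follow immediately from the corresponding statements for $W$ implied by the global Markov property of the SCM in \eqref{eq:SCM}, and the full-support requirement on $(E,U,\tilde W,X,Y)$ will transfer from $(E,U,W,X,Y)$ because every level of $\tilde W$ is attained by at least one level of $W$ with positive probability.

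For the construction, let $R_{i_1},\dots,R_{i_r} \in \mathbb{R}^{k_E}$ be a row basis of $P(W|E,x)$. Because each column of $P(W|E,x)$ sums to one, the all-ones vector $\mathbf 1 \in \mathbb R^{k_E}$ lies in the row space, so $\mathbf 1 = \sum_{k=1}^{r} \alpha_k R_{i_k}$ for unique coefficients $\alpha_k$, at least one of which is nonzero. After relabelling, assume $\alpha_r \neq 0$. Then define
\begin{equation*}
\tilde W := \begin{cases} \tilde w_k, & \text{if } W = w_{i_k},\ k \in \{1,\dots,r-1\}, \\ \tilde w_r, & \text{otherwise,} \end{cases}
\end{equation*}
so that $k_{\tilde W} = r$. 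The first $r-1$ rows of $P(\tilde W|E,x)$ coincide with $R_{i_1},\dots,R_{i_{r-1}}$, while the last row equals $\mathbf 1 - \sum_{k=1}^{r-1} R_{i_k}$, since the rows of $P(W|E,x)$ that are absorbed into $\tilde w_r$ together with $R_{i_1},\dots,R_{i_{r-1}}$ sum to $\mathbf 1$.

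Linear independence of these $r$ rows is the only non-trivial verification. Suppose $\sum_{k<r}\beta_k R_{i_k} + \gamma\bigl(\mathbf 1 - \sum_{k<r} R_{i_k}\bigr) = 0$. If $\gamma = 0$, independence of $R_{i_1},\dots,R_{i_{r-1}}$ forces all $\beta_k = 0$; if $\gamma \neq 0$, then $\mathbf 1$ lies in the span of $R_{i_1},\dots,R_{i_{r-1}}$, which by uniqueness of the basis expansion would require $\alpha_r = 0$, contradicting our choice. Hence $\mathrm{rank}(P(\tilde W|E,x)) = r$, as required.

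The main subtlety --- and the only place where something could go wrong --- is ensuring that merging the $k_W - r + 1$ redundant levels of $W$ into a single value of $\tilde W$ does not collapse the rank below $r$. Choosing the basis vector $R_{i_r}$ that is absorbed into the merged bucket so that it appears with nonzero coefficient in the representation of $\mathbf 1$ is precisely what prevents this collapse. The rest of the argument --- inheritance of the conditional independences and of the positivity of the joint distribution under the function $\tilde W = f(W)$ --- is routine and uses only that functions of conditionally independent variables remain conditionally independent and that a surjective function maps a full-support distribution to a full-support distribution.
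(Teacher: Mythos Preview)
Your argument is correct, and the construction is cleaner than the paper's. The paper proceeds iteratively: at each step it picks $r+1$ rows containing a basis and one redundant row $v_i=\sum_{j\neq i}\lambda_j v_j$, uses non-negativity of the entries to find some $\lambda_k\neq -1$, merges level $w_{a_i}$ into $w_{a_k}$, and invokes a separate lemma showing that replacing $v_k$ by $v_i+v_k$ keeps the $r$ rows independent; this is repeated until $k_{\tilde W}=r$. Your approach instead does everything in one shot: keep $r-1$ basis rows and collapse all remaining levels into a single bucket, choosing the absorbed basis vector via the observation that $\mathbf{1}$ lies in the row space (because columns of a conditional-probability matrix sum to one). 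Both proofs hinge on the same issue---showing the merged row does not destroy independence---but your route avoids the auxiliary lemma and the induction, at the cost of relying on the column-stochastic structure rather than mere non-negativity. The inheritance of conditional independence and full support is handled identically in both proofs.
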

\begin{lemma}
\label{lem:233}
    Let $w_1,\dots,w_n$ be linearly independent vectors in a 
    vector space $V$, and let
    \begin{equation*}
    v = \sum_{i=1}^n \lambda_i\,w_i,
    \end{equation*}
    with scalars $\lambda_i\in \mathbb{R}$. 
    If $\lambda_1 \neq -1$, then 
    \begin{equation*}
    v + w_1,w_2,\dots, w_n
    \end{equation*}
    are linearly independent.
\end{lemma}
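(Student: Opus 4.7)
The plan is to argue directly by assuming a vanishing linear combination of the proposed vectors and showing all coefficients must be zero. Concretely, I would suppose scalars $\alpha_0,\alpha_2,\dots,\alpha_n$ satisfy
\begin{equation*}
\alpha_0\,(v + w_1) + \sum_{i=2}^{n} \alpha_i\,w_i = 0,
\end{equation*}
and then substitute the expression $v = \sum_{i=1}^{n} \lambda_i w_i$ to rewrite this as a single linear combination of $w_1,\dots,w_n$.

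After substitution and regrouping, the left-hand side becomes
\begin{equation*}
\alpha_0\,(\lambda_1 + 1)\,w_1 + \sum_{i=2}^{n} \bigl( \alpha_0 \lambda_i + \alpha_i \bigr)\,w_i = 0.
\end{equation*}
By the assumed linear independence of $w_1,\dots,w_n$, every coefficient must vanish. The coefficient of $w_1$ gives $\alpha_0(\lambda_1 + 1) = 0$; invoking the hypothesis $\lambda_1 \neq -1$ forces $\alpha_0 = 0$. Plugging this back into the remaining equations yields $\alpha_i = 0$ for $i = 2,\dots,n$, completing the argument.

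There is no real obstacle here: the proof is a one-line substitution followed by reading off coefficients, and the role of the hypothesis $\lambda_1 \neq -1$ is precisely to ensure that the coefficient of $w_1$ does not collapse, which is the only point where a nontrivial cancellation could occur. I would present it exactly in the two short steps above, with no auxiliary lemmas needed.
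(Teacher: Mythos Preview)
Your proof is correct and follows essentially the same approach as the paper: both assume a vanishing linear combination of $v+w_1,w_2,\dots,w_n$, substitute the expansion of $v$, and use linear independence of $w_1,\dots,w_n$ together with $\lambda_1+1\neq 0$ to conclude that all coefficients vanish. The only difference is cosmetic notation.
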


\begin{proof}
    Write
    \begin{equation*}
    v + w_1 = (\lambda_1 + 1)\,w_1 + \sum_{i=2}^n \lambda_i\,w_i.
    \end{equation*}
    Suppose there exist scalars $a,b_2,\dots,b_n$ such that
    \begin{equation*}
    a\,(v + w_1) \;+\;\sum_{i=2}^n b_i\,w_i \;=\; 0.
    \end{equation*}
    Substituting the expansion of $v+w_1$ gives
    \begin{equation*}
    a\Bigl[(\lambda_1+1)w_1 + \lambda_2w_2 + \cdots + \lambda_nw_n\Bigr]
    \;+\;\sum_{i=2}^n b_i\,w_i \;=\;0.
    \end{equation*}
    Collecting coefficients on the independent set $\{w_1,\dots,w_n\}$ yields the system
    \begin{equation*}
    \begin{cases}
    a\,(\lambda_1+1) = 0,\\
    a\,\lambda_i + b_i = 0,\quad i \in \{2,\dots,n\}.
    \end{cases}
    \end{equation*}
    Since $\lambda_1+1\neq0$, the first equation forces $a=0$, which implies $b_2=\ldots =b_n=0$.  Hence, the only solution is the trivial one, proving that
    \begin{equation*}
    v + w_1,w_2,\dots,w_n
    \end{equation*}
    are linearly independent.
\end{proof}

\begin{proof}[Proof of \cref{prop:identifiability_2}]
Fix $x \in \supp(X)$. If $\mathrm{rank}(P(W|E,x)) = k_W$ the result follows considering $\tilde{W} = W$. Assume $r \coloneqq \mathrm{rank}(P(W|E,x)) < k_W$. We proceed as follows:
    Choose $r+1$  rows of $P(W|E,x)$,
    denoted by $v_1, \dots, v_{r+1}$
        (with row indices denoted by $a_1, \ldots, a_{r+1}$),
such that 
    $v_1, \ldots, v_{i-1}, v_{i+1}, \ldots, v_{r+1}$ are linearly independent and 
    $v_i$ is a linear combination of the others,
    that is, there exist $\lambda_1, \dots, \lambda_{i-1}, \lambda_{i+1}, \dots, \lambda_{r+1} \in \mathbb{R}$ such that $v_i = \sum_{j \neq i} \lambda_j v_j$.
    Since the entries of $P(W|E,x)$ are non-negative, there exists $k \in \{1, \dots, r+1\} \setminus\{i\}$ 
    such that $\lambda_k \neq -1$.
    We define a new random variable 
    $\tilde{W} \coloneqq W\mathds{1}\{W \neq w_{a_i}\} + w_{a_k} \mathds{1}\{W = w_{a_i}\}$.
    By \cref{lem:233} we have that $P(\tilde{W}|E,x)$ is a $(k_W -1) \times k_E$-dimensional matrix and $\mathrm{rank}(P(\tilde{W}|E,x)) = \mathrm{rank}(P(W|E,x))$.
    It also holds that
    $\supp((E,U,\tilde{W},X,Y)) = \supp(E) \times \supp(U) \times \supp(\tilde{W}) \times \supp(X) \times \supp(Y)$.
    Continue this procedure until all rows are independent. By an abuse of notation we call the resulting random variable again $\tilde{W}$. For all random variables $A,B,C$ and $f$ measurable $A \indep B \ \vert \ C \implies f(A) \indep B \ \vert \ C$. 
    It therefore holds that
\begin{equation*}
    X,Y,\tilde{W} \indep E \, | \, U \qquad \qquad  X \indep \tilde{W} \, | \, U. 
\end{equation*}
We have that $\mathrm{rank}(P(\tilde{W}|E,x)) = \mathrm{rank}(P(W|E,x)) \geq k_U$ and $P(\tilde{W}|E,x)$ has full row rank.
\end{proof}

\subsection{Asm. \ref{as:RankAssumption} and identifiability}
\label{app:Asm1Ident}
We now present a counterexample showing that identifiability is, in general, not possible  if \cref{as:RankAssumption} is violated. 
Concretely, we now construct two different SCMs.
Both of them satisfy 
$\text{supp}(X) := \{0,1\}$
and
$\text{supp}(Y) := \{0,1\}$.
We consider $x:=0 \in \text{supp}(X)$ and $y:=0 \in \text{supp}(Y)$
and are interested in inferring $q(y|\mathrm{do}(x))$. 
Both SCMs satisfy 
$\text{supp}(W) := \{w_1, w_2, w_3\}$,
$\text{supp}(U) := \{u_1, u_2, u_3\}$ and, for both SCMs,
$P(W|U)$ and 
$P(y|U,W,x)$ are such that, for all $j \in \{1, 2, 3\}$,
\begin{equation*}
P(W|U)=
\begin{pmatrix}
    0.23 & 0.3 & 0.2 \\
    0.46 & 0.6 & 0.4 \\
    0.31 & 0.1 & 0.4
\end{pmatrix} \text{ and } \hspace{1cm}
P(y|U,w_j,x) = 
P(y|U,x) = 
\begin{pmatrix}
    0.5 & 0.2 & 0.3
\end{pmatrix}.
\end{equation*}
Furthermore, we choose, again for both SCMs,
$P(E)$, $P(U|E)$ and $P(X|U)$, such that 
the induced source distribution has full support over 
$(E,U,W,X,Y)$ (this is possible, as sums of products of strictly positive numbers are strictly positive).
Denoting by $c_1$, $c_2$, and $c_3$ the columns of $P(W|U)$, we have
\begin{equation}
c_1 = \frac{3}{10} c_2 + \frac{7}{10} c_3,
\label{eq:LinearDependence}
\end{equation}
with $c_2$ and $c_3$ being linearly independent, so $\mathrm{rank}(P(W|U)) = 2 < k_U$. Using \cref{eq:Str2} and the inequality $\mathrm{rank}(AB) \leq \mathrm{min} \{\mathrm{rank}(A),\mathrm{rank}(B)\}$ \citep{MatrixAlgebra}, we get that $\mathrm{rank}(P(W|E,x))<k_U$, so in both SCMs \cref{as:RankAssumption} is violated.

The two SCMs differ in the target distribution of $U$: we define
\begin{equation*}
  P_1(U)\coloneqq\begin{pmatrix}
    p_1(u_1) \\
    p_1(u_2)\\
    p_1(u_3)
  \end{pmatrix} = \begin{pmatrix}
      0.6 \\
      0.3 \\
      0.1
  \end{pmatrix} \quad \text{ and } \quad
  P_2(U)\coloneqq\begin{pmatrix}
    p_2(u_1) \\
    p_2(u_2)\\
    p_2(u_3)
  \end{pmatrix} = \begin{pmatrix}
      0.5 \\
      0.33 \\
      0.17
  \end{pmatrix}.
\end{equation*}
The induced target distributions of $W$ for the two SCMs are given by the law of total probability:
\begin{align*}
P_1(W) &= P(W|U) P_1(U) = 0.6c_1 + 0.3c_2 + 0.1c_3 = \begin{pmatrix} 0.248 \\ 0.496 \\ 0.256 \end{pmatrix} \\
&= 0.5c_1 + 0.33c_2 + 0.17c_3 = P(W|U)P_2(U) = P_2(W).
\end{align*}
Therefore, both SCMs induce the same target distribution of the variable $W$.
Thus, both SCMs induce the same source distributions over  
$(E, X, Y, W)$ 
and the same target distribution over $W$, and thereby agree on 
the distributions over all observed quantities.

However, if we calculate the causal effect $q(y|\mathrm{do}(x))$ for each SCM using the covariate adjustment formula (see \cref{eq:CovAdj}), we find that they differ:
\begin{align*}
q_1(y|\mathrm{do}(x)) &= \sum_{r=1}^{3} p(y|u_r,x) p_1(u_r) = 0.39\\
\neq\, q_2(y|\mathrm{do}(x)) &= \sum_{r=1}^{3} p(y|u_r,x) p_2(u_r) = 0.367.
\end{align*}
In conclusion, the causal effect from $X$ to $Y$ is not identifiable. 

Logically, the 
counterexample implies that if the causal effect from $X$ to $Y$ is identified
via~\cref{eq:formula}
for all SCMs satisfying all assumptions except for possibly
\cref{as:RankAssumption}
(implying, in particular, that the pseudo-inverse $P(W|E,x)^{\dagger}$ exists), 
then 
\cref{as:RankAssumption}
must hold, too.

\subsection{Reduced parametrisation} \label{app:detailseta}
The parameter vector $\eta_{x,y}$ should have components corresponding to the following probabilities:
\begin{equation*}
\begin{gathered}
\Bigl( q_{\eta_{x,y}}(w_1, e_T), \ldots, q_{\eta_{x,y}}(w_{k_W-1}, e_T), q_{\eta_{x,y}}(e_T), p_{\eta_{x,y}}(w_1,x,e_1), \ldots, \\p_{\eta_{x,y}}(w_{k_W-1},x,e_{k_E}), 
p_{\eta_{x,y}}(y,x,e_1), \dots, p_{\eta_{x,y}}(y,x,e_{k_E}), p_{\eta_{x,y}}(x,e_1), \dots, p_{\eta_{x,y}}(x,e_{k_E})\Bigr).
\end{gathered}
\end{equation*}
We denote the number of components of this vector by $k_{\eta} \coloneqq k_W + (k_W+1)k_E$, 
so the corresponding parameter space is contained in $H = [0,1]^{k_{\eta}}$.

\subsection{Additional covariate/observed confounder} \label{app:ObservedConfounder}
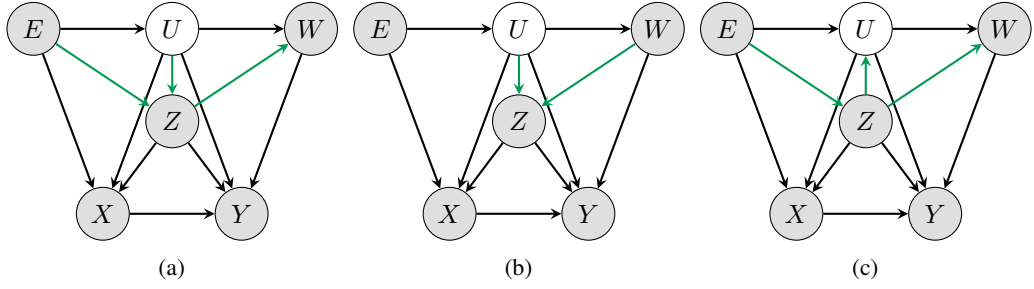
\begin{figure}[t]
    \newcommand{\xshift}{5.25em}
    \newcommand{\yshift}{-3.5em}
    \centering
    \begin{subfigure}{0.33\textwidth}
        \centering
        \begin{tikzpicture}
            \centering
            \node(E)[obs]{$E$};
            \node(U)[latent, xshift=\xshift]{$U$};
            \node(W)[obs, xshift=2*\xshift]{$W$};
            \node(X)[obs, xshift=0.5*\xshift, yshift=2*\yshift]{$X$};
            \node(Y)[obs, xshift=1.5*\xshift, yshift=2*\yshift]{$Y$};
            \node(Z)[obs, xshift=\xshift, yshift=\yshift]{$Z$};
            \edge[thick, -stealth]{E}{U};
            \edge[thick, -stealth]{U}{X,W};
            \edge[thick, -stealth]{U,X}{Y};
            \edge[thick, -stealth]{W}{Y};
            \edge[thick, -stealth, color=\edgecolor]{U}{Z};
            \edge[thick, -stealth]{Z}{X};
            \edge[thick, -stealth]{Z}{Y};
            \edge[thick, -stealth, color=\edgecolor]{Z}{W};
            \edge[thick, -stealth, color=\edgecolor]{E}{Z};
            \edge[thick, -stealth]{E}{X};
        \end{tikzpicture}
        \caption{}
        \label{subfig:obs_conf_a}
    \end{subfigure}%
    \begin{subfigure}{0.33\textwidth}
        \centering
        \begin{tikzpicture}
            \centering
            \node(E)[obs]{$E$};
            \node(U)[latent, xshift=\xshift]{$U$};
            \node(W)[obs, xshift=2*\xshift]{$W$};
            \node(X)[obs, xshift=0.5*\xshift, yshift=2*\yshift]{$X$};
            \node(Y)[obs, xshift=1.5*\xshift, yshift=2*\yshift]{$Y$};
            \node(Z)[obs, xshift=\xshift, yshift=\yshift]{$Z$};
            \edge[thick, -stealth]{E}{U};
            \edge[thick, -stealth]{U}{X,W};
            \edge[thick, -stealth]{U,X}{Y};
            \edge[thick, -stealth,]{W}{Y};
            \edge[thick, -stealth, color=\edgecolor]{U}{Z};
            \edge[thick, -stealth]{Z}{X};
            \edge[thick, -stealth]{Z}{Y};
            \edge[thick, -stealth, color=\edgecolor]{W}{Z};
            \edge[thick, -stealth]{E}{X};
        \end{tikzpicture}
        \caption{}
        \label{subfig:obs_conf_b}
    \end{subfigure}%
    \begin{subfigure}{0.33\textwidth}
        \centering
        \begin{tikzpicture}
            \centering
            \node(E)[obs]{$E$};
            \node(Z)[obs, xshift=\xshift, yshift=\yshift]{$Z$};
            \node(U)[latent, xshift=\xshift]{$U$};
            \node(W)[obs, xshift=2*\xshift]{$W$};
            \node(X)[obs, xshift=0.5*\xshift, yshift=2*\yshift]{$X$};
            \node(Y)[obs, xshift=1.5*\xshift, yshift=2*\yshift]{$Y$};
            \edge[thick, -stealth]{E}{U};
            \edge[thick, -stealth]{U}{X,W};
            \edge[thick, -stealth]{U,X}{Y};
            \edge[thick, -stealth,]{W}{Y};
            \edge[thick, -stealth, color=\edgecolor]{Z}{U};
            \edge[thick, -stealth]{Z}{X};
            \edge[thick, -stealth]{Z}{Y};
            \edge[thick, -stealth, color=\edgecolor]{Z}{W};
            \edge[thick, -stealth, color=\edgecolor]{E}{Z};
            \edge[thick, -stealth]{E}{X};
        \end{tikzpicture}
        \caption{}
        \label{subfig:obs_conf_c}
    \end{subfigure}%
    \caption{\textbf{Causal effect estimation in unseen domains with both observed and unobserved confounders.} Shown are three example graphs that allow for identification of the conditional and unconditional causal effect in the presence of an additional observed confounder, see~\cref{thm:id_obs_confounder} for details. The black edges are shared across all graphs; the green edges differ.}
    \label{fig:ObservedConfounder}
\end{figure}

We now consider the case in which we have an additional observed confounder $Z$.
Specifically, suppose that the SCM induces
one of the graphs depicted in \cref{fig:ObservedConfounder}.

We replace~\cref{as:RankAssumption} with the following assumption.
\begin{assumption}
The proxy variable $W$ is such that $k_W \geq k_U$ and, for all ${x\in \supp(X)}$ and $z \in supp(Z)$, the matrix $P(W|E,z,x)$ has linearly independent rows. 
\label{as:ObservedConfounder}
\end{assumption}
\begin{theorem}
\label{thm:id_obs_confounder}
    Assume an SCM whose induced graph is one of those shown in Figure~\ref{fig:ObservedConfounder}. Let \(E, U, W, X, Y, Z\) denote the random variables generated by this SCM, and suppose their joint support factorizes as
    $\mathrm{supp}(E,U,W,X,Y,Z) =
    \mathrm{supp}(E)\,\times\,\mathrm{supp}(U)\,\times\,\mathrm{supp}(W)\,\times\,\mathrm{supp}(X)\,\times\,\mathrm{supp}(Y)\,\times\,\mathrm{supp}(Z)$.
    Suppose that \cref{as:ObservedConfounder} holds.
    We then have 
    for all $x \in \supp(X)$, $z\in\supp(Z)$, and $y \in \supp(Y)$:
    \begin{align*}
        q(y|\mathrm{do}(x),z)
        = P(y|E,x,z) P(W|E,x,z)^{\dagger} Q(W|z)
    \end{align*}
    and
    \begin{equation*}
        q(y|\mathrm{do}(x)) = \sum_{z \in \supp(Z)} P(y|E,x,z) P(W|E,x,z)^{\dagger} Q(W|z) q(z).
    \end{equation*}
    Therefore, if $(E,W,X,Y,Z)$ is observed in the source domains and $(W,Z)$ in the target domain, the conditional causal effect of $X$ on $Y$ given $Z=z$ and the total causal effect of $X$ on $Y$ in the target domain $e_T$ are identifiable.
\end{theorem}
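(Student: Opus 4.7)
The plan is to stratify on $Z=z$ and, within each stratum, re-run the argument underlying \cref{prop:identifiability} essentially verbatim. First, I would verify that, in each of the three graphs of \cref{fig:ObservedConfounder}, the d-separations $Y \indep E \mid (U, X, Z)$ and $W \indep (E, X) \mid (U, Z)$ hold; these are exactly the stratum-wise analogues of the independencies exploited in the proof of \cref{prop:identifiability} and coincide with the graphical conditions spelled out in the ``Generalization'' paragraph of \cref{ch:ProblemDescription}. Applying them together with the law of total probability yields, for each $z \in \supp(Z)$, the matrix factorizations
\begin{equation*}
P(y|E,x,z) = P(y|U,x,z)\, P(U|E,x,z), \qquad P(W|E,x,z) = P(W|U,z)\, P(U|E,x,z).
\end{equation*}

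Next, as in the proof of \cref{prop:identifiability}, the submultiplicativity of rank applied to the second identity, combined with \cref{as:ObservedConfounder} (full row rank of $P(W|E,x,z)$ and $k_W \geq k_U$), forces $P(W|U,z)$ to be square ($k_W = k_U$) and invertible. Solving the second identity for $P(U|E,x,z)$ and substituting into the first then gives
\begin{equation*}
P(y|U,x,z) = P(y|E,x,z)\, P(W|E,x,z)^{\dagger}\, P(W|U,z).
\end{equation*}

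Then I would apply the conditional covariate adjustment formula in the target domain, $q(y|\mathrm{do}(x), z) = Q(y|U,x,z)\, Q(U|z)$, together with modularity: in each of the three graphs, $E$ is not a parent of $W$ or $Y$, so $Q(y|U,x,z) = P(y|U,x,z)$ and $Q(W|U,z) = P(W|U,z)$. Combining with the previous display and using $P(W|U,z)\, Q(U|z) = Q(W|z)$ to marginalize $U$ produces the first claimed identity. Finally, since $Z$ is never a descendant of $X$ in any of the three graphs, $q(z|\mathrm{do}(x)) = q(z)$, and the law of total probability gives $q(y|\mathrm{do}(x)) = \sum_z q(y|\mathrm{do}(x), z)\, q(z)$, yielding the second identity.

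The main obstacle will be the uniform verification of the d-separation statements across the three graphs, whose edges among $\{Z, U, W, E\}$ point in different directions (for instance $U \to Z$ in panels (a) and (b) versus $Z \to U$ in (c); $Z \to W$ in (a) and (c) versus $W \to Z$ in (b)), so these independencies have to be re-checked graph-by-graph rather than derived once and for all. A secondary subtlety is tracking which conditional mechanisms are shared between source and target in each graph; the theorem is set up so that $E$ enters only through the $U$-mechanism (and, where applicable, the $X$- and $Z$-mechanisms), which is exactly what licenses the substitutions $Q(y|U,x,z) = P(y|U,x,z)$ and $Q(W|U,z) = P(W|U,z)$ used above.
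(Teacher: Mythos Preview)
Your proposal is correct and follows essentially the same route as the paper: stratify on $Z$, use the d-separations $Y\indep E\mid(U,X,Z)$ and $W\indep(E,X)\mid(U,Z)$ to factor $P(y|E,x,z)$ and $P(W|E,x,z)$ through $U$, invert via \cref{as:ObservedConfounder}, then combine with conditional adjustment and modularity; the total effect follows by marginalising over $Z$ using $q(z|\mathrm{do}(x))=q(z)$. The only expository difference is that the paper unpacks the step $q(y|\mathrm{do}(x),z)=Q(y|U,x,z)\,Q(U|z)$ explicitly via do-calculus rules~2 and~3 (checking $U\indep X\mid Z$ in $\mathcal{G}_{\overline{X}}$ and $Y\indep X\mid(U,Z)$ in $\mathcal{G}_{\underline{X}}$), whereas you invoke it as a ready-made conditional adjustment formula; since these graphical conditions are precisely the ones listed in the ``Generalization'' paragraph you cite, this is fine.
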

\begin{proof}
Since the global Markov condition holds for the induced graph and distribution of SCMs, we have for the graphs in~\cref{fig:ObservedConfounder} that $Y\indep E |(U,X,Z)$ and $W\indep (E,X)|(U,Z)$ and thus for all 
$x \in \supp(X)$, 
$y \in \supp(Y)$, 
$z \in \supp(Z)$: \
\begin{align*}
P(y|E,x,z) &= P(y|U,x,z)P(U|E,x,z)\\
P(W|E,x,z) &= P(W|U,z)P(U|E,x,z).
\end{align*}
Under \cref{as:ObservedConfounder}, we follow the same steps as in the proof in \cref{app:prop:identifiability} to get
\begin{equation}
\label{eq:ConditionConfounder}
P(y|U,x,z) = P(y|E,x,z) P(W|E,x,z)^{\dagger} P(W|U,z).
\end{equation}
The target conditional interventional distribution is given by
\begin{align*}
    q(y|\mathrm{do}(x),z)
    &=Q(y|U,\mathrm{do}(x),z)Q(U|\mathrm{do}(x),z)\\
    &=Q(y|U,\mathrm{do}(x),z)Q(U|z)\\
    &=Q(y|U,x,z)Q(U|z)\\
    &=P(y|U,x,z)Q(U|z),
\end{align*}
where the first line follows from the law of total probability;
the second line follows the fact that $U\indep X|Z$ in the post-intervention graph $\mathcal{G}_{\overline{X}}$ (formally, do-calculus rule 3);
the third line follows from parent adjustment applied to $x$ (formally, do-calculus rule 2);
and the last line follows from the assumed modularity or invariance.
Substituting the expression from \cref{eq:ConditionConfounder} then yields
\begin{align}
    q(y|\mathrm{do}(x),z)
    &=P(y|E,x,z) P(W|E,x,z)^{\dagger} P(W|U, z)Q(U|z)\\
    &= P(y|E,x,z) P(W|E,x,z)^{\dagger} Q(W|z).
    \label{eq:CATE_identifiable_estimand}
\end{align}
Hence, if~\cref{as:ObservedConfounder} holds and $(W,Z)$ is observed in the target domain, the conditional interventional distribution $q(y|\mathrm{do}(x),z)$ is identified.

Furthermore,
\begin{align*}
q(y|\mathrm{do}(x)) 
&=\sum_{z \in \supp(Z)} q(y|\mathrm{do}(x),z) q(z|\mathrm{do}(x)) \\
&= \sum_{z \in \supp(Z)} q(y|\mathrm{do}(x),z) q(z) \\
&= \sum_{z \in \supp(Z)} P(y|E,x,z) P(W|E,x,z)^{\dagger} Q(W|z) q(z),
\end{align*}
where the first line follows from the law of total probability; the second from $Z\indep X$ in the post-intervention graph $\mathcal{G}_{\overline{X}}$; and the last by substitution of~\eqref{eq:CATE_identifiable_estimand}.
Therefore, if $(W,X,Y,Z)$ is observed in the source domains and $(W,Z)$ in the target domain, the causal effect $q(y|\mathrm{do}(x))$ is identifiable.
\end{proof}
\begin{remark}[Relaxing the latent shift assumption via $E\to X$]
\label{remark:necessary_conditions}
The existence of an edge from $E$ to~$X$ in the graphs in \cref{fig:ObservedConfounder} means that the latent shift assumption required by~\citet{alabdulmohsin2023adapting} and \citet{tsai2024proxy} (which we also rely on in the main paper) can be relaxed in our setting. Specifically, we can also allow for (direct or unmediated) covariate shift, i.e., $$X\not\independent E|(U,Z).$$
This 
applies not only to~\cref{thm:id_obs_confounder} but also (in the form $X\not\independent E|U$) to~\cref{prop:identifiability}, i.e., the identifiability result without observed confounders $Z$ presented in the main paper. 

The reason for this possible relaxation is that~\cref{prop:identifiability} only makes use of
\begin{align*}
   \{Y\indep E |(U,X),\, W\indep (E,X)|U\} \qquad &\text{in} \qquad \mathcal{G},\\
   U\indep X \qquad &\text{in} \qquad \mathcal{G}_{\overline{X}},\\
   Y \indep X | U  \qquad &\text{in} \qquad \mathcal{G}_{\underline{X}},
\end{align*}
 while~\cref{thm:id_obs_confounder} only requires the analogous statements  given $Z$,
 \begin{align*}
    \{Y\indep E |(U,X,Z),\, W\indep (E,X)|(U,Z)\} \qquad &\text{in} \qquad \mathcal{G},\\
   U\indep X|Z \qquad &\text{in} \qquad \mathcal{G}_{\overline{X}},\\
   Y \indep X | (U,Z)  \qquad &\text{in} \qquad \mathcal{G}_{\underline{X}},
 \end{align*}
where $\mathcal{G}_{\overline{X}}$ and $\mathcal{G}_{\underline{X}}$
denote the graphs obtained by respectively removing the edges coming into $X$ and going out of $X$ from $\mathcal{G}$. All these relations still hold when including $E\to X$.
\end{remark}

\subsection{Continuous proxy variable} \label{app:ContinuousProxy}

We can extend our results to continuous proxy variables $W$ with similar ideas as the ones used in \cref{sec:rass1}. Let $m \in \mathbb{N}$ and $V_1, \dots, V_m \subseteq \supp(W)$ be pairwise disjoint measurable sets such that $\supp(W) = \bigcup_{i \leq m} V_i$. Define $\tilde{W}_{V} \coloneqq \sum_{i \leq m} i \mathds{1}\{W \in V_i\}$.
\begin{assumption}
\label{as:233}
    Assume that, for all $x \in \supp(X)$, there exist $m \in \mathbb{N}$ and $V_1, \dots, V_m \subseteq \supp(W)$ pairwise disjoint measurable sets such that $\supp(W) = \bigcup_{i \leq m} V_i$ and
    $\mathrm{rank}(P(\tilde{W}_{V}|E,x)) \geq k_U$.
\end{assumption}
\begin{proposition}
    Under \cref{as:233}, for all ${x \in \supp(X)}$, we have for the $W$-measurable random variable $\tilde{W}_V$ that, 
    for all $y \in \supp(Y)$,
    \begin{equation*}
        q(y|\mathrm{do}(x)) = P(y|E,x) \, P(\tilde{W}_V|E,x)^{\dagger} \, Q(\tilde{W}_V).
    \end{equation*}
    Therefore, if $(E,W,X,Y)$ is observed in the source domains and $W$ in the target domain, the causal effect of $X$ on $Y$ in the target domain $e_T$ is identifiable. 
\label{prop:identifiability_3}
\end{proposition}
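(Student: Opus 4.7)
The plan is to reduce the continuous-proxy case to the already established discrete-proxy case by viewing $\tilde{W}_V$ as a new, discrete proxy variable obtained by deterministic discretisation of $W$. Since $\tilde{W}_V = \sum_{i\leq m} i \,\mathds{1}\{W \in V_i\}$ is a measurable function of $W$, any conditional independence satisfied by $W$ is inherited by $\tilde{W}_V$. In particular, the conditional independences $(X,Y,W)\indep E \,|\,U$ and $X \indep W \,|\,U$ (which hold in the SCM of~\cref{eq:SCM}) imply $(X,Y,\tilde{W}_V)\indep E \,|\,U$ and $X \indep \tilde{W}_V \,|\,U$. Moreover, because $\tilde{W}_V$ depends on the SCM only through $W$, the modularity/invariance statement $P(W|U)=Q(W|U)$ carries over to $P(\tilde{W}_V|U) = Q(\tilde{W}_V|U)$.

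First, I would argue that, without loss of generality, every value $i\in\{1,\ldots,m\}$ in the support of $\tilde{W}_V$ satisfies $\mathbb{P}(W\in V_i)>0$: if not, drop that index, relabel, and note that this does not decrease the rank of $P(\tilde{W}_V|E,x)$ (removed rows are identically zero). Combined with the full-support assumption on the original SCM, this yields $\supp((E,U,\tilde W_V,X,Y)) = \supp(E)\times\supp(U)\times\supp(\tilde W_V)\times\supp(X)\times\supp(Y)$, so $\tilde W_V$ meets all the structural prerequisites of the discrete setting from~\cref{ch:ProblemDescription}.

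Second, I would verify the rank condition. \cref{as:233} is precisely \cref{as:RankAssumption} stated for the proxy $\tilde{W}_V$, i.e.\ $\mathrm{rank}(P(\tilde{W}_V|E,x)) \geq k_U$ for all $x\in\supp(X)$. Applying \cref{prop:identifiability_2} (if needed, after a further measurable transformation) we may then assume $P(\tilde W_V|E,x)$ has linearly independent rows, so that its right pseudoinverse exists. With the invariance, conditional independences, support, and rank conditions all verified for $\tilde W_V$, I would invoke \cref{prop:identifiability} verbatim with $W$ replaced by $\tilde W_V$, yielding
\begin{equation*}
q(y|\mathrm{do}(x)) = P(y|E,x)\, P(\tilde{W}_V|E,x)^{\dagger}\, Q(\tilde{W}_V),
\end{equation*}
for all $x\in\supp(X)$ and $y\in\supp(Y)$, which is the desired formula.

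The main obstacle is conceptual rather than technical: one must be careful that the discretisation preserves all the structural properties used in the proof of \cref{prop:identifiability}, in particular (i) the conditional independences, which rely only on measurability of the map $W\mapsto \tilde W_V$; (ii) the cross-domain invariance $P(\tilde W_V|U)=Q(\tilde W_V|U)$, which inherits from invariance of $P(W|U)$; and (iii) the technical full-support assumption, handled by dropping null partition cells. Once these are in place, no new identification argument is needed, since the remainder of the proof is a direct application of \cref{prop:identifiability} to the discrete random variable $\tilde{W}_V$.
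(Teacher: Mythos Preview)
Your proposal is correct and follows essentially the same approach as the paper: reduce to \cref{prop:identifiability} by treating $\tilde W_V$ as the discrete proxy and noting that the needed conditional independences $(X,Y,\tilde W_V)\indep E\,|\,U$ and $X\indep\tilde W_V\,|\,U$ are inherited from $W$ by measurability. The paper's proof is a one-liner that glosses over the full-support and invariance verifications you spell out, so your version is in fact more complete.
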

\begin{proof}
    The result follows directly from \cref{prop:identifiability} with $W = \tilde{W}_V$ and the observation that
$$
X,Y,\tilde{W}_V \indep E \, | \, U \qquad \qquad  X \indep \tilde{W}_V \, | \, U.
$$
\end{proof}

\section{Implementation of the experiments and further simulation results} \label{app:simu}
\subsection{Data generation} \label{app:simu-data}
The algorithm to generate the data for the simulation studies is based on the SCM introduced in \cref{eq:SCM}. 
The algorithm has the following steps:
\begin{enumerate}[label=(\roman*),leftmargin=2em]
    \item Generate $P(U|E)$, $Q(U)$, $P(W|U)$, $P(X|U)$ and $\{ P(y|U,W,x) \}_{x \in \supp(X), y \in \supp(Y)}$ at random. The sum of each column of $P(U|E)$, $Q(U)$, $P(W|U)$ and $P(X|U)$ has to be equal to 1; we generate each column of length $k$ from a Dirichlet distribution with parameter $\alpha=(1,\dots,1) \in \mathbb{R}^{k}$, so the values are uniformly distributed in the $(k-1)$-dimensional simplex \citep{dasgupta2011probability}. Similarly, for all $r \in \{1,\dots,k_U\}$, $j \in \{1,\dots,k_W\}$ and $s \in \{1,\dots,k_X\}$, $\sum_{i=1}^{n} p(y_i|u_r,w_j,x_s) = 1$; we generate  the values $p(y_i|u_r,w_j,x_s)$ from a Dirichlet distribution with parameter $(1,\dots,1)\in \mathbb{R}^{k_Y}$. The variable $E$ is generated from a discrete uniform distribution in $\{e_1,\dots,e_{k_E},e_T\}$, where $e_1, \ldots, e_{k_E}$ are considered the source domains and $e_T$ is the target domain.
    \label{alg1}
    \item Generate an i.i.d. sample with sample size $n$ of $(E,U,W,X,Y)$ from the induced distribution of the SCM defined by the matrices from step \ref{alg1} following the assignments in \cref{eq:SCM}.\label{alg2}
    \item Repeat step \ref{alg2} $N$ times.\label{alg3}
    \item Repeat steps \ref{alg1}--\ref{alg3} $M$ times.
\end{enumerate}
This algorithm generates $M N$ samples (each one corresponding to in total $n$ data points, some of which belong to the target domain and the rest to the source domains). The matrices from step \ref{alg1} are used to calculate the true value $q(y|\mathrm{do}(x))$, which is used to measure the absolute estimation error. Each sample obtained in step \ref{alg2} is used to calculate an estimate of the causal effect. The causal effects $q(y|\mathrm{do}(x))$ generated by this algorithm have a sample mean around 0.5.

\subsection{Optimization}
The implementation of the causal estimator presented in \cref{sec:Causal parametrisation} requires the optimization of the likelihood function in \cref{eq:LikelihoodFunction}. The components of $\theta$ are optimized first over $\mathbb{R}$ and later transformed into valid pmfs using the softmax function. The optimization is done using the default \texttt{optim} function in R with the L-BFGS-B algorithm \citep{byrd1995limited}. The initial value 
for each component of $\theta$ is generated from a continuous uniform distribution in $[0,1]$ and we consider a maximum of 50 000 iterations.

\subsection{Point estimation} \label{app:simu-point}
The plot presented in \cref{fig:PointEstimationComparison} uses a sample size of $n=20\,000$. 
We repeat the same procedure but using the sample size $n=1000$ in \cref{fig:PointEstimationn1e3} and $n=100\,000$ in \cref{fig:PointEstimationn1e5}. 
These three figures suggest that  increasing the sample size indeed leads to a reduction of the absolute estimation error for both methods, which gradually approaches zero.
\begin{figure}[t]
    \centering
    \includegraphics[width=0.85\linewidth]{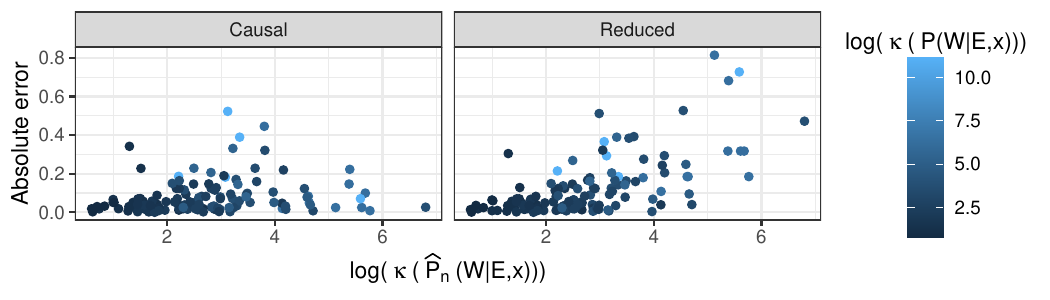}
    \caption{\textbf{Comparison of the absolute error for both parametrisations when having a small sample size.} Absolute error for a setting with $k_E=2$, $n=1000$, $M=25$ and $N=5$. The error is larger in both cases than in \cref{fig:PointEstimationComparison}, even for the low values of the condition number $\kappa(P(W|E,x))$.}
    \label{fig:PointEstimationn1e3}
\end{figure}
\begin{figure}[t]
    \centering
    \includegraphics[width=0.85\linewidth]{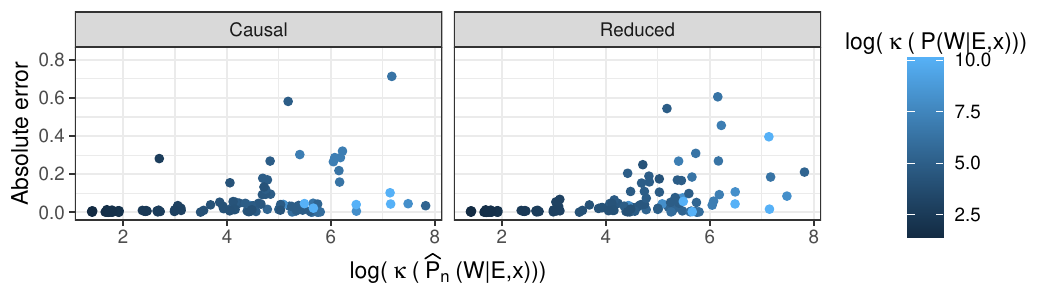}
    \caption{\textbf{Comparison of the absolute error for both parametrisations when having a large sample size.} Absolute error for a setting with $k_E=2$, $n=100\,000$, $M=25$ and $N=5$. The are more points close to zero than in \cref{fig:PointEstimationComparison,fig:PointEstimationn1e3}, although there are still some estimates with a large absolute error.}
    \label{fig:PointEstimationn1e5}
\end{figure}

The left panel of \cref{fig:pairwise} shows a pairwise comparison between the causal and reduced parametrisations and is based on the same samples as \cref{fig:PointEstimationComparison}. Most of the points are located close to the identity line, so the absolute error is similar for both methods.
The right panel shows the results when adding another source domain and shows smaller differences between the estimates of both parametrisations. Furthermore, we increase the sample size when adding an extra source domain to keep the average sample size per domain approximately constant in both settings.
\begin{figure}[t]
    \centering
    \includegraphics[width=0.38\linewidth]{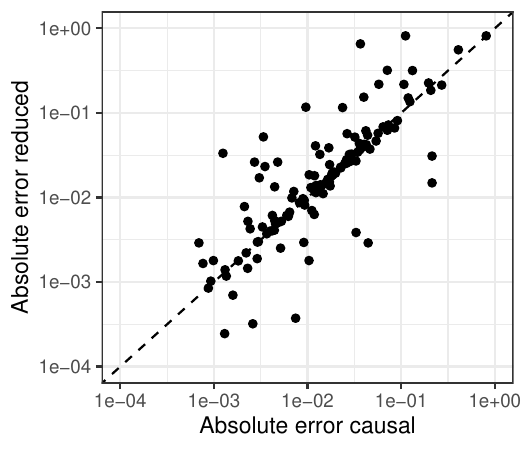}
    \includegraphics[width=0.38\linewidth]{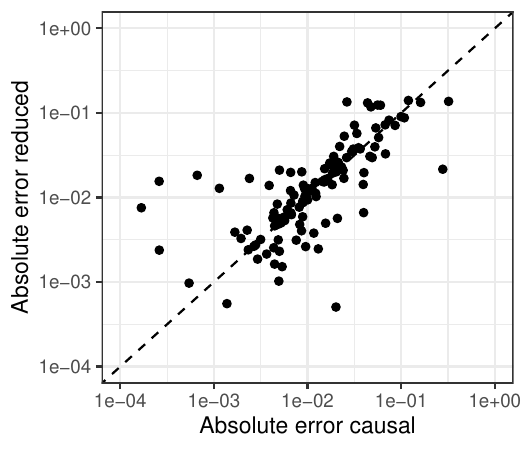}
    \caption{\textbf{Pairwise comparison of the absolute error for the causal and reduced parametrisations.} We compare the graphs with $k_E=2$ (left, $n = 20\,000$) and $k_E=3$ (right, $n=25\,000$). The dashed line has a slope equal to 1. The results provided by both estimators are better for the causal estimator in the first case, while they become more similar when including one extra domain, where the reduced estimator provides slightly more accurate estimates, with an average absolute estimation error of 0.0255 for the causal estimator and 0.0252 for the reduced estimator.}
    \label{fig:pairwise}
\end{figure}

Regarding the baseline comparison in \cref{fig:BaselineComparison}, we only consider samples from induced distributions corresponding to causal models where ${|q(y|\mathrm{do}(x))-q(y|x)|>0.1}$ to avoid those cases in which the causal effect of $X$ on $Y$ was barely confounded. We compare the two estimators introduced in this work with the following five methods.
\paragraph{Oracle.} We generate the data directly from the intervention distribution $\mathbb{Q}_{Y}^{\mathrm{do}(X:=x)}$. For an i.i.d.\ sample $Y_1,\dots,Y_n$ from this marginal intervention distribution of $Y$ in the target domain, we estimate causal effect by
    \begin{equation*}
        \widehat{q}_{\mathrm{oracle},n}(y|\mathrm{do}(x)) \coloneqq \frac{1}{n} \sum_{i=1}^{n} \mathds{1}(Y_i=y).
    \end{equation*}
    Since the oracle estimator has access 
    to the intervention distribution, it should not be considered a competing method but rather 
    a benchmark (or, as in \cref{ch:Application}, an estimator for the ground truth).
In the simulation experiments, the differences between the oracle estimates and the true value $q(y|\mathrm{do}(x))$ are due to the sample size being finite.
    
    \paragraph{No adjustment estimator.} This estimator ignores the hidden confounding and estimates the causal effect by  estimating the conditional distribution $Y|X$. We consider two variants: the first one (NoAdj) uses an i.i.d.\ sample $(X_1,Y_1),\dots, (X_n,Y_n)$ from the marginal distribution $\mathbb{P}_{(X,Y)}^{\mathcal{C}}$ pooled across all source domains, while the second one (NoAdj*) uses an i.i.d.\ sample $(X_1,Y_1),\dots, (X_n,Y_n)$ from the marginal distribution $\mathbb{Q}_{(X,Y)}^{\mathcal{C}}$ in the target domain; this, again,  is information that is not available in our setting, so (NoAdj*) should not be considered a competing method, either.  
    Both NoAdj and NoAdj* estimate the causal effect with the following expression:
    \begin{equation}
        \widehat{q}_{\mathrm{NoAdj},n}(y|\mathrm{do}(x)) \coloneqq \frac{\sum_{i=1}^{n} \mathds{1}(Y_i=y,X_i=x)}{\sum_{i=1}^{n} \mathds{1}(X_i=x)}.
    \label{eq:ConditionalEstimator}
    \end{equation}
    
    \paragraph{$W$-adjustment estimator.} 
    This estimator uses the adjustment formula~\eqref{eq:CovAdj} with the proxy variable $W$ instead of $U$.
    $\{W\}$ is not a valid adjustment set and, in general, this estimator is not a consistent estimator for the causal effect of $X$ on $Y$. However, if the proxy $W$ is `similar' to the confounder $U$ (for example, if they are equal except for a small noise perturbation), the estimate could be close to the true value. We again consider two variants: one which uses an i.i.d.\ sample $(W_1,X_1,Y_1),\dots,(W_n,X_n,Y_n)$ from $\mathbb{P}_{(W,X,Y)}^{\mathcal{C}}$ pooled across the source domains (WAdj) and the other from $\mathbb{Q}_{(W,X,Y)}^{\mathcal{C}}$, that is, from the target domain (WAdj*); as before, the latter estimator should not be considered a competing method as it uses information that is not available in our setting. In both cases, the estimator is defined as
    \begin{equation*}
        \widehat{q}_{W
        \mathrm{Adj},n}(y|\mathrm{do}(x)) \coloneqq \sum_{j=1}^{k_W} \left[ \left( \frac{\sum_{i=1}^{n} \mathds{1}(Y_i=y,X_i=x,W_i=w_j)}{\sum_{i=1}^{n} \mathds{1}(X_i=x,W_i=w_j)} \right) \left( \frac{\sum_{i=1}^{n} \mathds{1}(W_i=w_j)}{n} \right) \right].
    \end{equation*}

\subsection{Coverage of confidence intervals} \label{app:simu-coverage}
We compare the proposed confidence intervals with bootstrap confidence intervals based on the
normal approximation method \citep{davison1997bootstrap}.
Here, the variance is estimated using the sample variance $\widehat{\sigma}_{B}^{2}$ of the bootstrap estimates for $q(y|\mathrm{do}(x))$. In that case, the bootstrap confidence interval is given by
\begin{equation*}
\Bigl[ \widehat{q}_{R,n}(y|\mathrm{do}(x))-\widehat{\sigma}_{B} \, z_{1-\frac{\alpha}{2}}, \widehat{q}_{R,n}(y|\mathrm{do}(x))+\widehat{\sigma}_{B} \, z_{1-\frac{\alpha}{2}} \Bigr].
\end{equation*}

\begin{figure}[t]
    \centering
    \includegraphics[width=0.95\linewidth]{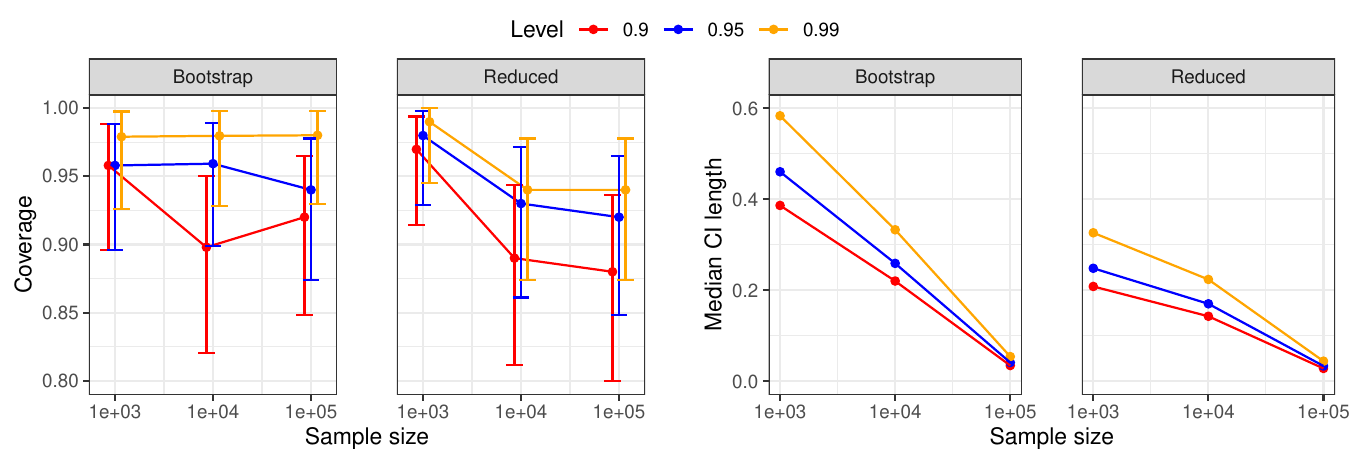}
    \caption{\textbf{Comparison of the coverage and median interval length of asymptotic and bootstrap confidence intervals as a function of the sample size.} The coverage is close to the nominal level in both cases but it is slightly better for the bootstrap method. The length of the bootstrap confidence intervals is larger than the ones obtained from the asymptotic normality of the estimator, especially for small values of the sample size.
    }
    \label{fig:mergedfigurecoverage}
\end{figure}
\cref{fig:mergedfigurecoverage} presents the results comparing the bootstrap confidence intervals and the ones obtained through the asymptotic normality of the reduced estimator (see \cref{eq:CIconstruction}). The comparison is done based on two properties of these intervals: coverage and median length. With regards to coverage, the empirical values obtained in the simulations are close to the nominal value, being slightly more accurate for the bootstrap method. However, the length of the bootstrap confidence intervals is larger. 

\subsection{Runtime analysis}
\label{app:runtime}
We compare the different estimation methods and baselines presented in \cref{fig:BaselineComparison} in terms of computation time. For this purpose, we measure the total time needed to compute an estimate 50 times (to obtain more accurate time measurements) with each estimator. This value depends on the implementation of each method and the equipment used for the measurement, so we focus on the comparison between the methods and not on the specific time values. \cref{fig:TimeComparison} presents the computation time for each method as a function of the sample size. The proposed estimators have a computation time larger than the baselines, but this difference is marginal for the reduced estimator, staying below one second for this runtime analysis. The time is larger for the causal estimator due to the optimization procedure needed to calculate the estimate.

\begin{figure}[t]
    \centering
    \includegraphics[width=0.7\linewidth]{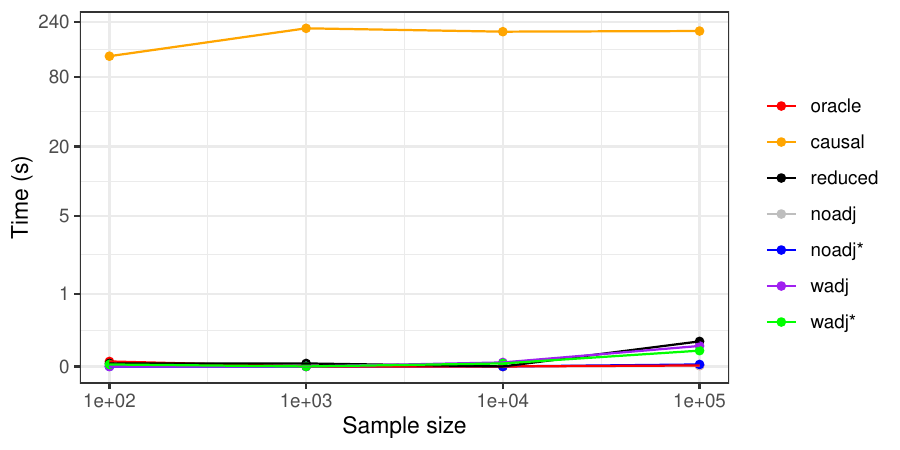}
    \caption{\textbf{Comparison of the computation time for the proposed methods and baselines.} The proposed estimators present a higher computation time than the baselines. However, the reduced estimator is only marginally slower than the $W$-adjustment estimators but presents a better performance in terms of absolute error in \cref{fig:BaselineComparison}.
    }
    \label{fig:TimeComparison}
\end{figure}

\section{Comparison of the effect of different ranking positions on user's choices} \label{app:real}
\begin{figure}[t]
    \centering
    \includegraphics[width=0.85\linewidth]{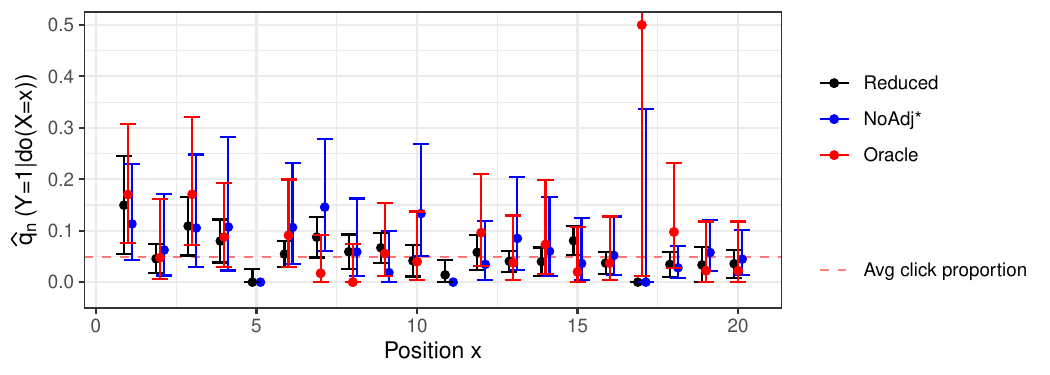}
    \caption{\textbf{Estimation of the causal effect of different ranking positions on the probability of clicking on a fixed hotel considered as target domain.} The horizontal dashed line represents the average proportion of clicks on that hotel across all positions. The confidence intervals for the positions 1 and 3 for the reduced and oracle estimators are completely above that line, showing that these positions have a significant effect on the probability of being clicked.}
    \label{fig:ExpediaVariableK}
\end{figure}
We estimate $q(Y=1|\mathrm{do}(X=x))$ for different values of the position variable $x \in \{1,\dots,20\}$ in a unique fixed target domain, that is, the probability of clicking in a fixed hotel 
(we consider the hotel with target domain ID equal to 1 in the experiment in \cref{fig:ExpediaPrediction}) under the atomic intervention of fixing its position to $x$. We again compare the reduced estimator with the oracle estimator. 
\cref{fig:ExpediaVariableK}
shows the results. As before, there is an overlap between the confidence intervals for the reduced estimator and the oracle in all cases, except for those positions $x$ in which the position $X=x$ was not observed in the randomized dataset (positions 5 and 11), so it is not possible to calculate the oracle estimate. All the confidence intervals are at level 0.95 and the ones corresponding to the oracle and no adjustment estimators are calculated from the exact test for binomial data using the default \texttt{binom.test} function in R.

The horizontal line indicates the average proportion of clicks on that hotel across all positions when they are randomized; this allows us to see for which positions $x$ the probability of clicking on the hotel under the intervention $\mathrm{do}(X:=x)$ is larger or smaller than the average value. 
The confidence intervals constructed using the reduced parametrisation and the oracle corresponding to the positions $X=1$ and $X=3$ are completely above the line representing the average proportion of clicks on the hotel across all positions. Therefore, we conclude that the probability of clicking on that specific hotel when its position is fixed to 1 or 3 through an intervention is significantly higher (at level 0.95) than the average proportion across all the positions.
\end{document}